\newtheorem{lemma}{Lemma}
\newcommand\defeq{\mathrel{\stackrel{\makebox[0pt]{\mbox{\normalfont\scriptsize def}}}{:=}}}
\newcommand\copyrighttext{%
  \footnotesize © 2024 IEEE.  Personal use of this material is permitted.  Permission from IEEE must be obtained for all other uses, in any current or future media, including reprinting/republishing this material for advertising or promotional purposes, creating new collective works, for resale or redistribution to servers or lists, or reuse of any copyrighted component of this work in other works.}
\newcommand\copyrightnotice{%
\begin{tikzpicture}[remember picture,overlay]
\node[anchor=south,yshift=10pt] at (current page.south) {\fbox{\parbox{\dimexpr\textwidth-\fboxsep-\fboxrule\relax}{\copyrighttext}}};
\end{tikzpicture}%
}
\title{\LARGE \bf
EC-IoU: Orienting Safety for Object Detectors \\via Ego-Centric Intersection-over-Union
}
\author{Brian Hsuan-Cheng Liao$^{\dagger\ddagger}$, Chih-Hong Cheng$^{\mathsection\sharp}$, Hasan Esen$^{\dagger}$, Alois Knoll$^{\ddagger}$%
\thanks{$^{\dagger}$DENSO AUTOMOTIVE Deutschland GmbH, Germany} %
\thanks{$^{\ddagger}$Technical University of Munich, Germany} %
\thanks{$^{\mathsection}$Chalmers University of Technology, Sweden} %
\thanks{$^{\sharp}$University of Gothenburg, Sweden} %
\thanks{Correspondence to \tt\small {h.liao}@eu.denso.com}
}
\begin{document}

\maketitle
\copyrightnotice

\begin{abstract}

This paper presents Ego-Centric Intersection-over-Union (EC-IoU), addressing the limitation of the standard IoU measure in characterizing safety-related performance for object detectors in navigating contexts. Concretely, we propose a weighting mechanism to refine IoU, allowing it to assign a higher score to a prediction that covers closer points of a ground-truth object from the ego agent's perspective. The proposed EC-IoU measure can be used in typical evaluation processes to select object detectors with better safety-related performance for downstream tasks. It can also be integrated into common loss functions for model fine-tuning. While geared towards safety, our experiment with the KITTI dataset demonstrates the performance of a model trained on EC-IoU can be better than that of a variant trained on IoU in terms of mean Average Precision as well.

\end{abstract}

\section{Introduction}

Object detection is an essential function in robot perception and navigation. Thanks to emerging learning-based algorithms and pipelines, object detectors have achieved unprecedented performance and have been applied in many application domains~\cite{wu2020recent}. Nevertheless, some of the applications, especially those bearing safety criticality, seem to meet certain challenges when it comes to scaling, e.g., mass production and broad deployment of highly automated vehicles in the autonomous driving industry. According to industrial standards such as ISO 21448 (a.k.a. SOTIF)~\cite{iso21448} and ANSI/UL 4600~\cite{ul4600}, one crucial yet seemingly lacking factor is the implementation of leading safety-related performance indicators. Having observed the potential gap, we develop a safety-oriented measure for object detectors that can better reflect the notion of safety and, thereby, alleviate the challenges.

Typically, the Intersection-over-Union (IoU) measure is used for comparing model predictions against ground truths (taken as objects)~\cite{padilla2020survey}. It provides a good indication of the model performance in common scene understanding applications. However, IoU focuses only on the absolute position of the ground truth (i.e., being object-centric) and may be limited when the relative position between the ground truth and the ego is crucial (e.g., in a driving context). Additionally, studies have suggested that state-of-the-art object detectors usually saturate at IoU's around~$0.7$\textasciitilde$1$~\cite{he2022alphaiou}. It is sometimes hard to further differentiate or improve these object detectors. Hence, there is a need for a more fine-grained indicator to distinguish such saturating models and predict which will incur fewer safety concerns during system operations.

\begin{figure}[t]
    \centering
    \includegraphics[width=0.65\linewidth]{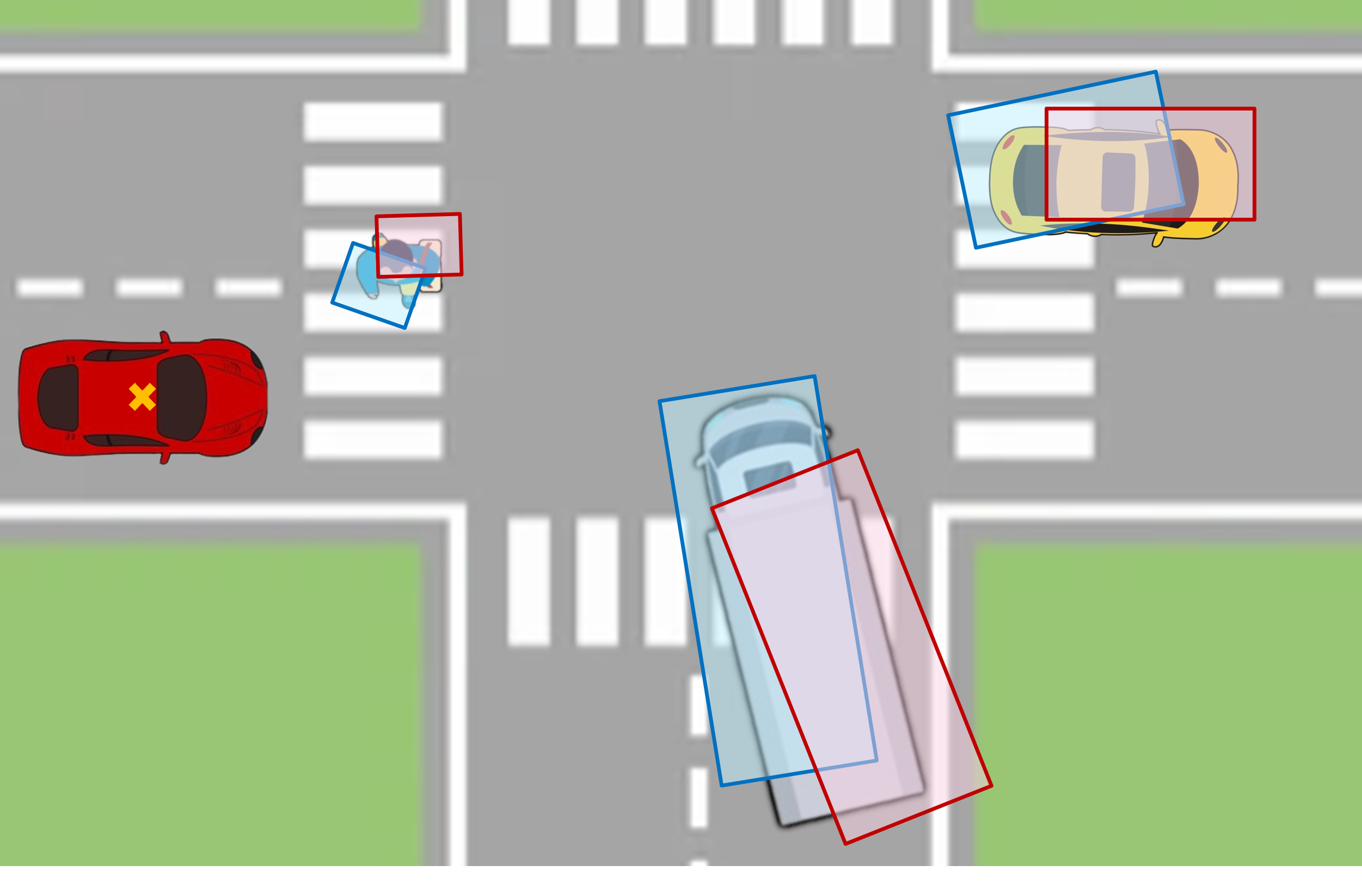}
    \caption{A diagram showing our motivation. All blue and red predictions have an IoU of roughly~$0.7$. However, the blue ones should be prioritized to avoid potential collisions at the front of the objects from the angle of the (red) ego car.}
    \label{fig:concept}
\end{figure}

Our key contribution, thus, is the proposal of a refined measure, Ego-Centric IoU (EC-IoU), taking into account the ego's position when assessing a prediction against its ground truth. By doing so, the potential safety (or danger) of the ego and the object can be better reflected. To illustrate, given two predictions in the vicinity of ground truth, the one that covers a portion of the ground truth closer to the ego agent should be assessed as better. Fig.~\ref{fig:concept} depicts the concept. Technically, we start with a weighting function that assigns different levels of importance to different points in a ground truth; the closer to the ego, the more important. The weighting function is then incorporated into the formulation of IoU, resulting in the proposed EC-IoU. As finding the areas in IoU involves Green's Theorem and the weighted version of which becomes hard to solve, we further present an approximation scheme for computing EC-IoU via the Mean Value Theorem. We validate the approximation against the original curve computed by Monte Carlo integration and show it has the same time complexity as IoU.

EC-IoU can be used easily in common object detector evaluation pipelines. We incorporate it into two types of protocols, represented by the nuScenes~\cite{caesar2020nuscenes} and KITTI~\cite{geiger2012are} benchmarks\footnote{The utilization of the nuScenes and KITTI datasets in this paper is for knowledge dissemination and scientific publication and is not for commercial use.}. Our evaluation of several state-of-the-art models hosted on the MMDetection3D platform~\cite{mmdet3d2020} reveals that while achieving good accuracy (e.g., in terms of IoU), some of them may exhibit safety concerns. Furthermore, we utilize EC-IoU to train and fine-tune a model for more explicit safety awareness. While doing so, the optimization results show that our model also achieves higher mean Average Precision (mAP), the most widely used accuracy-based performance indicator, compared to an IoU-based-variant. Altogether, our work puts forth a novel instantiation of concretizing safety principles in the development of learning-based object detectors.

\section{Related Work}
\label{sec:related_work}
Object detection has been a long-standing research field, where readers may refer to recent surveys for a comprehensive overview of common sensors, detection algorithms, and available benchmarks~\cite{wu2020recent,ma2023object,mao2023object}. We focus here on the evolution of the localization/regression branch. 

Early approaches for object detection apply handcrafted filters to match patterns and locate objects in the input~\cite{lowe1999object,viola2001rapid}. Since the ``deep learning era," studies have considered various loss functions to regress model predictions towards ground truths via gradient descent. The earlier ones, such as the RCNN family~\cite{ren2015faster} and SSD~\cite{liu2016ssd}, are based on $L_1$ or $L_2$ norms, computing the numerical discrepancies in the object representing parameters (e.g., position, dimension, and orientation). Such approaches, however, result in normalization issues and ignore the spatial characteristics of the task. Therefore, later studies proposed IoU~\cite{yu2016unitbox} and Generalized-IoU~\cite{rezatofighi2018generalized} as the metrics and loss functions directly. More recently, they are augmented with regularizing terms and extended into Distance-IoU~\cite{zheng2019distanceiou} and Efficient-IoU~\cite{zhang2022focal} loss functions, achieving higher accuracy and faster convergence during training. Lastly, similar to Focal Loss~\cite{lin2017focal} for object classification, Focal-Efficient-IoU~\cite{zhang2022focal} and Alpha-IoU~\cite{he2022alphaiou} are proposed to find the effective samples and further improve the learning outcome.

The aforementioned results mainly fall into the scope of making the training more efficient and the object detector more accurate. Holding slightly different perspectives, some other studies have formulated considerations beyond accuracy. For instance, Waymo proposed the LET-3D-AP (Longitudinal-Error-Tolerating 3D-Average-Precision) metric, a relaxed evaluation protocol for camera-based object detectors considering their tendency to have larger longitudinal errors for objects at further distances from the ego vehicle~\cite{hung2022let3dap}. Focusing more on safety, the company suggested SDE (Support Distance Error), calculating absolute localization errors with reference to the ego vehicle's driving direction~\cite{deng2021revisiting}. Nonetheless, SDE was tailored for lidar-based models. More generalizable safety-oriented metrics can be found in~\cite{volk2020comprehensive,cheng2020safety,lyssenko2022towards}, in which model processing time and individual object importance (based on distances or expected time-to-collision) are taken into account. Still, in these works, when assessing predictions at the low level, only the ordinary IoU measure is used. Our work thereby complements these proposals with a more fine-grained measure. 

Finally, in terms of model compensation or refinement, the literature offers several proposals, e.g., robust learning against feature-level perturbations~\cite{cheng2020safety} and 2D bounding box post-processing based on worst-case analysis~\cite{schuster2022unaligned} or statistical approaches such as conformal prediction~\cite{degrancey2022object}. As we shall see, our work distinctively addresses 3D object detection using the bird's-eye-view (BEV) representation, which composes a more direct link with the downstream planning functions. We also amend the state-of-the-art loss functions (as mentioned above) with the proposed EC-IoU measure to achieve higher safety potential.

\section{Preliminaries}
\label{sec:preliminaries}

In this work, we follow the common practice of representing ground truths and predictions with 2D oriented bounding boxes in driving contexts~\cite{liu2022bevfusion}. Assuming the ego at the origin $O(0,0)$, a ground truth is normally annotated as a tuple $\hat{\mathbf{G}} \defeq (x_\mathbf{G}, y_\mathbf{G}, l_\mathbf{G}, w_\mathbf{G}, \theta_\mathbf{G})$, where $x_\mathbf{G}$ and $ y_\mathbf{G}$ are the center coordinates, $l_\mathbf{G}$ is the length (parallel with the $x$-axis when the orientation of the box is $0$), $w_\mathbf{G}$ is the width, and $\theta_\mathbf{G}$ the orientation. Similarly, a prediction, made by the object detector, can be represented by $\hat{\mathbf{P}} \defeq (x_\mathbf{P}, y_\mathbf{P}, l_\mathbf{P}, w_\mathbf{P}, \theta_\mathbf{P})$.

For evaluating model predictions, we use $\mathbf{P} \subset \mathbb{R}^2$ and $\mathbf{G} \subset \mathbb{R}^2$ to denote the 2D polygons (or, essentially, oriented bounding boxes) of the prediction~$\hat{\mathbf{P}}$ and ground truth~$\hat{\mathbf{G}}$. Technically, $\mathbf{P}$ and $\mathbf{G}$ can be reconstructed from $\hat{\mathbf{P}}$ and $\hat{\mathbf{G}}$ through their corners. Then, the IoU measure is given as:
\begin{equation}
    \mathsf{IoU}(\mathbf{P}, \mathbf{G}) \defeq \frac{\mathsf{Area}(\mathbf{P} \cap \mathbf{G})}{\mathsf{Area}(\mathbf{G}) + \mathsf{Area}(\mathbf{P}) - \mathsf{Area}(\mathbf{P} \cap \mathbf{G}) } \, ,
\label{eq:iou}
\end{equation}
where $\mathbf{P} \cap \mathbf{G}$ is the intersection of $\mathbf{P}$ and $\mathbf{G}$, and 
\begin{equation}
    \mathsf{Area}(\mathbf{D}) = \iint_\mathbf{D} 1 \,dA \, ,
\label{eq:area}
\end{equation}
for a polygon $\mathbf{D} \subset \mathbb{R}^2$. In the case of an oriented bounding box, the area can be simply calculated with its dimensions given in the tuple representation, e.g., $\mathsf{Area}(\mathbf{P}) = l_\mathbf{P} \times w_\mathbf{P}$. As for general polygons (e.g., $\mathbf{P} \cap \mathbf{G}$), there exists the Shoelace Formula to calculate their areas with their vertices' coordinates, which can be obtained through modern algorithms and software packages such as Shapely~\cite{gilles2023shapely}.

To illustrate, in Fig.~\ref{fig:concept}, all the predictions (including the blue and red ones) have an IoU of roughly~$0.7$ with respect to their corresponding ground truths. Still, as introduced, IoU only characterizes object-centric relations between the predictions and the ground truths generically. We now propose the EC-IoU measure, which additionally considers the ego position to better reflect the tendency of collisions in case of imperfect predictions around the ground truths.

\section{The EC-IoU Measure}

Overall, our approach is to first define safety-critical points in a ground truth and then check how well a prediction covers these safety-critical points.

\subsection{Safety-critical weighting for a ground truth}

To characterize safety-criticality for a ground truth $\mathbf{G}$, we start with a distance-based weighting function:
\begin{equation}
\label{eq:weighting}
    \omega_\mathbf{G}(x, y) \defeq \left[ \frac{\rho(x_\mathbf{G}, y_\mathbf{G})}{\rho(x, y)} \right]^{\alpha},
\end{equation}
where $(x, y) \in \mathbf{G}$ is a point within the ground truth, $\rho(x, y)$ denotes the Euclidean distance from the point $(x,y)$ to the origin (i.e., the position of the ego vehicle), and $\alpha \geq 0$ is a tunable parameter for adapting the basic weighting factor.

Essentially, the function in Eq.~\eqref{eq:weighting} weighs each point of the ground truth according to its distance to the origin. The closer the point is to the origin, the higher its weight, symbolizing its safety-criticality. The weights of different points are normalized with the ground truth center's distance so that all ground truths' centers have an equal weight of $1$. Using Fig.~\ref{fig:concept} as an example, for the pedestrian $\mathbf{G_{ped}}$, the truck $\mathbf{G_{truck}}$ and the car $\mathbf{G_{car}}$, we have $\omega_\mathbf{G_{ped}}(x_\mathbf{G_{ped}}, y_\mathbf{G_{ped}}) = \omega_\mathbf{G_{truck}}(x_\mathbf{G_{truck}}, y_\mathbf{G_{truck}}) = \omega_\mathbf{G_{car}}(x_\mathbf{G_{car}}, y_\mathbf{G_{car}}) = 1$. In effect, the normalization aligns the weights for all ground truths, as our goal is to rank predictions around a specific ground truth, instead of ranking different ground truths. In addition, it brings an implicit property: As a ground truth gets farther from the origin, the weighting function has less influence on it. We elaborate on this in the following lemma.

\vspace{1mm}

\begin{lemma}
    Given a ground truth $\mathbf{G}$ represented by $(x_\mathbf{G}, y_\mathbf{G}, l_\mathbf{G}, w_\mathbf{G}, \theta_\mathbf{G})$ with $\overline{\omega_\mathbf{G}}$ and $\underline{\omega_\mathbf{G}}$ being its maximum and minimum weights, if $\rho(x_\mathbf{G}, y_\mathbf{G}) \rightarrow \infty$, then $\overline{\omega_\mathbf{G}} = \underline{\omega_\mathbf{G}} = 1$.
    \label{lem:influence}
\end{lemma}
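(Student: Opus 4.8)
The plan is to exploit that the ground-truth box has fixed dimensions $l_\mathbf{G}, w_\mathbf{G}$, so every point it contains lies within a bounded radius of its center, and then to squeeze $\omega_\mathbf{G}$ between two quantities that both converge to $1$ as the center recedes to infinity. Concretely, first I would set $r_\mathbf{G} \defeq \tfrac{1}{2}\sqrt{l_\mathbf{G}^2 + w_\mathbf{G}^2}$, the half-diagonal of the box; this bound on the distance from any point of $\mathbf{G}$ to its center $(x_\mathbf{G}, y_\mathbf{G})$ is independent of the orientation $\theta_\mathbf{G}$. Applying the triangle inequality to the points $O$, $(x_\mathbf{G}, y_\mathbf{G})$ and an arbitrary $(x, y) \in \mathbf{G}$ gives $\lvert \rho(x,y) - \rho(x_\mathbf{G}, y_\mathbf{G}) \rvert \le r_\mathbf{G}$. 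Hence, once $\rho(x_\mathbf{G}, y_\mathbf{G}) > r_\mathbf{G}$ (true for all sufficiently large center distances), we have $0 < \rho(x_\mathbf{G}, y_\mathbf{G}) - r_\mathbf{G} \le \rho(x, y) \le \rho(x_\mathbf{G}, y_\mathbf{G}) + r_\mathbf{G}$ everywhere on $\mathbf{G}$, so $\omega_\mathbf{G}$ is well defined and finite there.

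Substituting these bounds into Eq.~\eqref{eq:weighting} and using that $t \mapsto t^{\alpha}$ is nondecreasing for $\alpha \ge 0$, I would obtain, for every $(x,y) \in \mathbf{G}$,
\[
\left[\frac{\rho(x_\mathbf{G}, y_\mathbf{G})}{\rho(x_\mathbf{G}, y_\mathbf{G}) + r_\mathbf{G}}\right]^{\alpha} \;\le\; \omega_\mathbf{G}(x, y) \;\le\; \left[\frac{\rho(x_\mathbf{G}, y_\mathbf{G})}{\rho(x_\mathbf{G}, y_\mathbf{G}) - r_\mathbf{G}}\right]^{\alpha}.
\]
Taking the infimum and supremum over $\mathbf{G}$ shows the same two expressions bound $\underline{\omega_\mathbf{G}}$ and $\overline{\omega_\mathbf{G}}$ (and, since the center itself has weight $1$, in fact $\underline{\omega_\mathbf{G}} \le 1 \le \overline{\omega_\mathbf{G}}$).

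Because $r_\mathbf{G}$ depends only on the fixed dimensions $l_\mathbf{G}, w_\mathbf{G}$, both sides of the displayed inequality tend to $1^{\alpha} = 1$ as $\rho(x_\mathbf{G}, y_\mathbf{G}) \to \infty$, by continuity of $t \mapsto t^{\alpha}$ at $t = 1$ (the case $\alpha = 0$ being trivial). The squeeze theorem then yields $\overline{\omega_\mathbf{G}} \to 1$ and $\underline{\omega_\mathbf{G}} \to 1$, as claimed. I do not expect a genuine obstacle here; the content is just the observation that $\rho(x,y)/\rho(x_\mathbf{G}, y_\mathbf{G}) \to 1$ uniformly over the fixed-size box. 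The only points requiring care are that the half-diagonal bound is orientation-independent, that one should restrict to $\rho(x_\mathbf{G}, y_\mathbf{G}) > r_\mathbf{G}$ so the lower bound remains positive, and that exponentiation by $\alpha \ge 0$ preserves both the inequalities and the limit.
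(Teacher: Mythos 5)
Your proof is correct and follows essentially the same route as the paper: bound the distance from any point of $\mathbf{G}$ to its center by the half-diagonal $\rho(l_\mathbf{G}/2, w_\mathbf{G}/2) = \tfrac{1}{2}\sqrt{l_\mathbf{G}^2+w_\mathbf{G}^2}$ via the triangle inequality, then let $\rho(x_\mathbf{G},y_\mathbf{G}) \to \infty$ so the ratio of distances tends to $1$. If anything, your two-sided uniform bound with the squeeze theorem is slightly tidier than the paper's argument, which treats $\overline{\omega_\mathbf{G}}$ and $\underline{\omega_\mathbf{G}}$ separately and converts an inequality into an equality with a ``constant'' $c$ that in fact varies with the box position (it is only bounded, which is all that is needed); you also correctly flag the need for $\rho(x_\mathbf{G},y_\mathbf{G}) > r_\mathbf{G}$ to keep the lower denominator positive.
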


\begin{proof}
    Let $(\overline{x}, \overline{y}) \in \mathbf{G}$ be the point such that $\forall (x,y) \in \mathbf{G}:   \omega_\mathbf{G}(\overline{x}, \overline{y})  \geq  \omega_\mathbf{G}(x, y)$, i.e., $\omega_\mathbf{G}(\overline{x}, \overline{y}) = \overline{\omega_\mathbf{G}}$. 
    Now, with the center $(x_\mathbf{G}, y_\mathbf{G})$, the point $(\overline{x},\overline{y})$, and the origin $(0, 0)$, we can write the triangle inequality as:
    \begin{equation}
    \begin{split}
        \rho(x_\mathbf{G}, y_\mathbf{G}) & \leq \rho(\overline{x}, \overline{y}) + 
        \rho(x_\mathbf{G} - \overline{x}, y_\mathbf{G} - \overline{y} ) \\
        & \leq \rho(\overline{x}, \overline{y}) + \rho( {l_\mathbf{G}}/{2}, {w_\mathbf{G}}/{2} ),
    \end{split}
    \label{eq:tri_ineq}
    \end{equation}
    where $\rho(x_\mathbf{G} - \overline{x}, y_\mathbf{G} - \overline{y})$ equals the distance between the center $(x_\mathbf{G}, y_\mathbf{G})$ and the point $(\overline{x},\overline{y})$, which is guaranteed to be smaller or equal to the distance between the center and the corner of the rectangle, i.e., $\rho( {l_\mathbf{G}}/{2}, {w_\mathbf{G}}/{2})$.
 
    Then, considering the ground truth has constant dimensions $l_\mathbf{G}$ and $w_\mathbf{G}$, we can rewrite Eq.~\eqref{eq:tri_ineq} into:
    \begin{equation}
        \rho(x_\mathbf{G}, y_\mathbf{G}) = \rho(\overline{x}, \overline{y}) + c,
    \end{equation}
    where $c$ is a constant with $0 < c \leq \rho({l_\mathbf{G}}/{2}, {w_\mathbf{G}}/{2})$. Finally, taking the definition in Eq.~\eqref{eq:weighting}, we obtain:
    \begin{equation}
    \begin{split}
        \lim_{\rho(x_\mathbf{G}, y_\mathbf{G}) \to \infty} \overline{\omega_\mathbf{G}} 
        & = \lim_{\rho(x_\mathbf{G}, y_\mathbf{G}) \to \infty} \omega_\mathbf{G}(\overline{x}, \overline{y}) \\
        & = \lim_{\rho(x_\mathbf{G}, y_\mathbf{G}) \to \infty} \left[ \frac{\rho(x_\mathbf{G}, y_\mathbf{G})}{\rho(\overline{x}, \overline{y})} \right]^{\alpha} \\
        & = \lim_{\rho(x_\mathbf{G}, y_\mathbf{G}) \to \infty} \left[ \frac{\rho(x_\mathbf{G}, y_\mathbf{G})}{\rho(x_\mathbf{G}, y_\mathbf{G}) - c} \right]^{\alpha} \\
        & = \lim_{\rho(x_\mathbf{G}, y_\mathbf{G}) \to \infty} \left[ \frac{1}{1 - {\frac{c}{\rho(x_\mathbf{G}, y_\mathbf{G})}}} \right]^{\alpha} \\
        & = \left[ \frac{1}{1-0} \right]^{\alpha} = 1^\alpha = 1.
    \end{split}
    \end{equation}
    Similarly, it can be shown that $\lim_{\rho(x_\mathbf{G}, y_\mathbf{G}) \to \infty} \underline{\omega_\mathbf{G}} = 1$.
\end{proof}

\vspace{2mm}

In other words, a ground truth near the ego will have a larger weight difference between its closest and farthest points, corresponding to the safety notion that nearer objects should be handled with more care.  

\subsection{Formulation of the EC-IoU Measure}

With the presented weighting function, we now define our Ego-Centric Intersection-over-Union (EC-IoU) measure, which shall give a score to a prediction~$\mathbf{P}$ based on the safety-criticality (i.e., importance) of its overlap with a ground truth~$\mathbf{G}$:

\vspace{-2mm}
\begin{equation}
\begin{split}
    & \mathsf{EC\text{-}IoU}(\mathbf{P}, \mathbf{G}) \\
    & \defeq \frac{\mathsf{Weighted\text{-}Area}_\mathbf{G}(\mathbf{P} \cap \mathbf{G})}{\mathsf{Weighted\text{-}Area}_\mathbf{G}(\mathbf{G}) + \mathsf{Area}(\mathbf{P}) - \mathsf{Area}(\mathbf{P} \cap \mathbf{G}) }, 
\end{split}
\label{eq:ec_iou}
\end{equation}
where 
\vspace{-2mm}
\begin{align}
    & \mathsf{Weighted\text{-}Area}_\mathbf{G}(\mathbf{D}) = \iint_\mathbf{D} \omega_\mathbf{G}(x, y) \, dA \, ,
    \label{eq:weighted_area}
\end{align}
for a polygon $\mathbf{D} \subseteq \mathbf{G}$. 
To explain, $\mathsf{Weighted\text{-}Area}_\mathbf{G}(\mathbf{D})$ is the importance-weighted area of a polygon (within the ground truth) defined as the sum of the point weights therein. Then, to reflect safety-criticality in the relation between $\mathbf{P}$ and $\mathbf{G}$, we compute such importance-weighted area for their intersection and divide it by the weighted area of the ground truth itself~(which is the maximum that the intersection can achieve). Lastly, similar to the ordinary IoU measure~\eqref{eq:iou}, we keep the term $\mathsf{Area}(\mathbf{P}) - \mathsf{Area}(\mathbf{P} \cap \mathbf{G})$ in the denominator to avoid an excessively large prediction.

With the proposed formulation, EC-IoU has two useful properties. Firstly, it is contained within the range~$[0, 1]$. Secondly, it maximizes at~$1$ if and only if a prediction is perfectly aligned with a ground truth. The following lemmas prove them, respectively.

\vspace{1mm}
\begin{lemma}
Given a ground truth $\mathbf{G}$ and an arbitrary prediction $\mathbf{P}$, $0 \leq \mathsf{EC\text{-}IoU}(\mathbf{P}, \mathbf{G}) \leq 1$.
\label{lem:bounds}
\end{lemma}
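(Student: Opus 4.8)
The plan is to read off the bound directly from the definition in Eq.~\eqref{eq:ec_iou}, handling the numerator and the denominator separately. First I would note that the weighting function of Eq.~\eqref{eq:weighting} is non-negative on all of $\mathbf{G}$ — in fact strictly positive, since a detected object never coincides with the ego, so $\rho(x,y) > 0$ for $(x,y) \in \mathbf{G}$. Hence $\mathsf{Weighted\text{-}Area}_\mathbf{G}(\mathbf{D}) = \iint_\mathbf{D} \omega_\mathbf{G}(x,y)\, dA \geq 0$ for every $\mathbf{D} \subseteq \mathbf{G}$, and in particular the numerator $\mathsf{Weighted\text{-}Area}_\mathbf{G}(\mathbf{P} \cap \mathbf{G}) \geq 0$.

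Next I would check that the denominator is (strictly) positive. Because $\mathbf{P} \cap \mathbf{G} \subseteq \mathbf{P}$, we have $\mathsf{Area}(\mathbf{P}) - \mathsf{Area}(\mathbf{P} \cap \mathbf{G}) = \mathsf{Area}(\mathbf{P} \setminus \mathbf{G}) \geq 0$; and for a non-degenerate ground-truth box ($l_\mathbf{G}, w_\mathbf{G} > 0$) the term $\mathsf{Weighted\text{-}Area}_\mathbf{G}(\mathbf{G}) > 0$, as the integrand is positive on a set of positive measure. Adding a non-negative quantity to a positive one keeps the denominator positive, so the ratio is well defined; being a non-negative numerator over a positive denominator, it satisfies $\mathsf{EC\text{-}IoU}(\mathbf{P},\mathbf{G}) \geq 0$. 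This disposes of the lower bound.

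For the upper bound it suffices to show that the numerator does not exceed the denominator. Since $\mathbf{P} \cap \mathbf{G} \subseteq \mathbf{G}$ and $\omega_\mathbf{G} \geq 0$, monotonicity of the integral gives $\mathsf{Weighted\text{-}Area}_\mathbf{G}(\mathbf{P} \cap \mathbf{G}) \leq \mathsf{Weighted\text{-}Area}_\mathbf{G}(\mathbf{G})$. Combining this with $\mathsf{Area}(\mathbf{P}) - \mathsf{Area}(\mathbf{P} \cap \mathbf{G}) \geq 0$ from the previous step yields
\[
\mathsf{Weighted\text{-}Area}_\mathbf{G}(\mathbf{P} \cap \mathbf{G}) \leq \mathsf{Weighted\text{-}Area}_\mathbf{G}(\mathbf{G}) + \mathsf{Area}(\mathbf{P}) - \mathsf{Area}(\mathbf{P} \cap \mathbf{G}),
\]
i.e.\ numerator $\leq$ denominator, so $\mathsf{EC\text{-}IoU}(\mathbf{P},\mathbf{G}) \leq 1$.

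I do not expect a genuine obstacle: the whole argument reduces to positivity and monotonicity of integrals, mirroring the classical proof that ordinary IoU lies in $[0,1]$. The only points deserving a sentence of care are the implicit non-degeneracy assumption on $\mathbf{G}$ (so the denominator cannot vanish) and the observation that $\rho$ never hits $0$ on a real ground-truth box, which keeps $\omega_\mathbf{G}$ finite and positive. One could additionally track when the monotonicity step is tight, but the characterization of equality $\mathsf{EC\text{-}IoU} = 1$ is the subject of the following lemma, so I would leave it out here.
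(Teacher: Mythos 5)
Your proof is correct and follows essentially the same route as the paper's: non-negativity of the weighted areas plus $\mathsf{Area}(\mathbf{P}) - \mathsf{Area}(\mathbf{P} \cap \mathbf{G}) \geq 0$ for the lower bound, and the containment $\mathbf{P} \cap \mathbf{G} \subseteq \mathbf{G}$ combined with the same non-negative term for the upper bound. The extra remarks on $\rho > 0$ and non-degeneracy of $\mathbf{G}$ are sensible hygiene that the paper leaves implicit, but nothing in the argument differs in substance.
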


\vspace{1mm}
\begin{proof}
Since $\mathbf{P} \cap \mathbf{G} \subseteq \mathbf{P}$,
\begin{equation}
    \mathsf{Area}(\mathbf{P}) - \mathsf{Area}(\mathbf{P} \cap \mathbf{G}) \geq 0 \, .
\label{eq:p_ineq}
\end{equation}
With $\mathsf{Weighted\text{-}Area}_\mathbf{G}(\mathbf{P} \cap \mathbf{G}) \geq 0$ and \\$\mathsf{Weighted\text{-}Area}_\mathbf{G}(\mathbf{G}) > 0$, the numerator in~\eqref{eq:ec_iou} is always larger than or equal to $0$, and the denominator is always larger than $0$. Hence, $\mathsf{EC\text{-}IoU}(\mathbf{P}, \mathbf{G}) \geq 0$. 

Additionally, since $\mathbf{P} \cap \mathbf{G} \subseteq \mathbf{G}$, 
\begin{equation}
    \mathsf{Weighted\text{-}Area}_\mathbf{G}(\mathbf{G}) - \mathsf{Weighted\text{-}Area}_\mathbf{G}(\mathbf{P} \cap \mathbf{G}) \geq 0 \, .
\label{eq:weighted_g_ineq}
\end{equation}
We combine it with~\eqref{eq:p_ineq}: 
\begin{equation}
\begin{split}
\mathsf{Weighted\text{-}Area}_\mathbf{G} & (\mathbf{P} \cap \mathbf{G}) - \mathsf{Weighted\text{-}Area}_\mathbf{G}(\mathbf{G}) \\ 
& \leq 0 \leq \mathsf{Area}(\mathbf{P}) - \mathsf{Area}(\mathbf{P} \cap \mathbf{G}) \, .
\end{split}
\end{equation}
Then, with a rewriting:
\begin{equation}
\begin{split}
& \mathsf{Weighted\text{-}Area}_\mathbf{G} (\mathbf{P} \cap \mathbf{G}) \\ 
& \leq \mathsf{Weighted\text{-}Area}_\mathbf{G}(\mathbf{G}) + \mathsf{Area}(\mathbf{P}) - \mathsf{Area}(\mathbf{P} \cap \mathbf{G}) \, .
\end{split}
\end{equation}
Finally, dividing both sides by the right-hand side and considering it is always larger than $0$, we have $\mathsf{EC\text{-}IoU}(\mathbf{P}, \mathbf{G}) \leq 1$. 
\end{proof}

\begin{lemma}
Given a ground truth~$\mathbf{G}$ and a prediction~$\mathbf{P}$, 
$\mathsf{EC\text{-}IoU}(\mathbf{P}, \mathbf{G}) = 1 \iff \mathbf{P}=\mathbf{G}$.
\label{lem:p_eq_g}
\end{lemma}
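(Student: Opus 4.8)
The plan is to prove the two implications separately: the reverse direction is a one-line substitution, while the forward direction carries the real content and will reuse the inequalities already established for Lemma~\ref{lem:bounds}.

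For ($\Leftarrow$), I would substitute $\mathbf{P}=\mathbf{G}$ directly into Eq.~\eqref{eq:ec_iou}. Then $\mathbf{P}\cap\mathbf{G}=\mathbf{G}$, so the numerator is $\mathsf{Weighted\text{-}Area}_\mathbf{G}(\mathbf{G})$, and in the denominator $\mathsf{Area}(\mathbf{P})-\mathsf{Area}(\mathbf{P}\cap\mathbf{G})=0$, leaving the denominator equal to $\mathsf{Weighted\text{-}Area}_\mathbf{G}(\mathbf{G})$; hence the ratio is exactly $1$.

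For ($\Rightarrow$), assume $\mathsf{EC\text{-}IoU}(\mathbf{P},\mathbf{G})=1$. Since the denominator of Eq.~\eqref{eq:ec_iou} is strictly positive (as shown in the proof of Lemma~\ref{lem:bounds}), this is equivalent to numerator $=$ denominator, which I would rearrange into
\[
\bigl[\mathsf{Weighted\text{-}Area}_\mathbf{G}(\mathbf{G}) - \mathsf{Weighted\text{-}Area}_\mathbf{G}(\mathbf{P}\cap\mathbf{G})\bigr] + \bigl[\mathsf{Area}(\mathbf{P}) - \mathsf{Area}(\mathbf{P}\cap\mathbf{G})\bigr] = 0.
\]
Both bracketed terms are nonnegative — the first by inequality~\eqref{eq:contain_ineq_weighted} (using $\mathbf{P}\cap\mathbf{G}\subseteq\mathbf{G}$ and $\omega_\mathbf{G}\ge 0$), the second by inequality~\eqref{eq:p_ineq} (using $\mathbf{P}\cap\mathbf{G}\subseteq\mathbf{P}$) — so each must vanish: $\mathsf{Area}(\mathbf{P}\setminus\mathbf{G})=0$ and $\mathsf{Weighted\text{-}Area}_\mathbf{G}(\mathbf{G}\setminus\mathbf{P})=0$. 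The next step is to upgrade these measure-zero statements to genuine inclusions. For the first, I would use that $\mathbf{P}$ is an oriented bounding box, hence a regular closed set with nonempty interior; if a point of $\mathbf{P}$ lay outside the closed set $\mathbf{G}$, then an interior point of $\mathbf{P}$ would lie in the open set $\mathbb{R}^2\setminus\mathbf{G}$, and a small disc around it would give $\mathbf{P}\setminus\mathbf{G}$ positive area — a contradiction — so $\mathbf{P}\subseteq\mathbf{G}$. For the second, I would note that $\omega_\mathbf{G}$ is bounded below by a positive constant on $\mathbf{G}$ (the standing modeling assumption is that the ego at the origin is not inside the object, so $\rho(x,y)>0$ on the compact set $\mathbf{G}$, cf.\ Eq.~\eqref{eq:weighting}); therefore $\mathsf{Weighted\text{-}Area}_\mathbf{G}(\mathbf{G}\setminus\mathbf{P})=0$ forces $\mathsf{Area}(\mathbf{G}\setminus\mathbf{P})=0$, and the same topological argument yields $\mathbf{G}\subseteq\mathbf{P}$. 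Combining the two inclusions gives $\mathbf{P}=\mathbf{G}$.

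The main obstacle is precisely the passage from vanishing (weighted) area to actual set equality: it requires invoking that bounding boxes are regular closed sets with nonempty interior and that the weighting function does not degenerate to zero anywhere on $\mathbf{G}$, the latter resting on the implicit assumption that the origin lies strictly outside $\mathbf{G}$ (which is also what makes Eq.~\eqref{eq:weighting} well defined). Everything else is algebraic bookkeeping with inequalities already in hand from Lemma~\ref{lem:bounds}.
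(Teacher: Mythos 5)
Your proof is correct and follows essentially the same route as the paper's: both directions reduce to observing that $\mathsf{EC\text{-}IoU}(\mathbf{P},\mathbf{G})=1$ forces the two nonnegative gaps $\mathsf{Weighted\text{-}Area}_\mathbf{G}(\mathbf{G})-\mathsf{Weighted\text{-}Area}_\mathbf{G}(\mathbf{P}\cap\mathbf{G})$ and $\mathsf{Area}(\mathbf{P})-\mathsf{Area}(\mathbf{P}\cap\mathbf{G})$ to vanish simultaneously, which is only possible when $\mathbf{P}=\mathbf{G}$. Your version is in fact more careful than the paper's, which simply asserts that each equality ``only takes place when'' the corresponding inclusion holds, whereas you explicitly justify the passage from vanishing (weighted) area to set inclusion via the regularity of bounding boxes and the strict positivity of $\omega_\mathbf{G}$.
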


\begin{proof}
We first prove the left implication ($\Leftarrow$). With $\mathbf{P} = \mathbf{G}$, we have $\mathsf{Area}(\mathbf{P}) - \mathsf{Area}(\mathbf{P} \cap \mathbf{G}) = 0$ and  $\mathsf{Weighted\text{-}Area}_\mathbf{G}(\mathbf{P} \cap \mathbf{G}) = \mathsf{Weighted\text{-}Area}_\mathbf{G}(\mathbf{G})$. Therefore, 
\begin{equation}
    \mathsf{EC\text{-}IoU}(\mathbf{P}, \mathbf{G}) = \frac{\mathsf{Weighted\text{-}Area}_\mathbf{G}(\mathbf{G})}{\mathsf{Weighted\text{-}Area}_\mathbf{G}(\mathbf{G})} = 1 \, .
\end{equation}

Now, we prove the right implication ($\Rightarrow$) by contradiction. Given $\mathsf{EC\text{-}IoU}(\mathbf{P}, \mathbf{G}) = 1$, we have
\begin{equation}
\begin{split}
    \mathsf{Weighted\text{-}Area}_\mathbf{G}(\mathbf{P} \cap \mathbf{G}) - \mathsf{Weighted\text{-}Area}_\mathbf{G}(\mathbf{G}) \\ 
    + \mathsf{Area}(\mathbf{P} \cap \mathbf{G}) - \mathsf{Area}(\mathbf{P}) = 0.
\end{split}
\end{equation}
With Eq.~\eqref{eq:p_ineq} and Eq.~\eqref{eq:weighted_g_ineq}, we arrive at
\begin{align}
    \mathsf{Weighted\text{-}Area}_\mathbf{G} (\mathbf{P} \cap \mathbf{G}) & = \mathsf{Weighted\text{-}Area}_\mathbf{G}(\mathbf{G}) \label{eq:eq_wa}, \\
    \mathsf{Area}(\mathbf{P}) & = \mathsf{Area}(\mathbf{P} \cap \mathbf{G}) \label{eq:eq_a} .
\end{align}
Then, assuming $\mathbf{P} \neq \mathbf{G}$, we see that Eq.~\eqref{eq:eq_wa} holds only when $\mathbf{G}$ is contained by $\mathbf{P}$ (i.e., $\mathbf{G} \subset \mathbf{P}$), and Eq.~\eqref{eq:eq_a} holds only when $\mathbf{P}$ is contained by $\mathbf{G}$ (i.e., $\mathbf{P} \subset \mathbf{G}$). Clearly, these two cases never take place at the same time, leading to $\mathsf{EC\text{-}IoU}(\mathbf{P}, \mathbf{G}) < 1$, which contradicts the given premise. Hence, if $\mathsf{EC\text{-}IoU}(\mathbf{P}, \mathbf{G}) = 1$, then $\mathbf{P} = \mathbf{G}$.
\end{proof}

\vspace{2mm}

The two properties shown above make EC-IoU an eligible measure that quantifies the quality of predictions within the range $[0, 1]$ and secures the highest score for the best prediction. In the remaining part of the section, we show that, as opposed to IoU, EC-IoU characterizes our goal to rank a prediction higher if it overlaps with a more important portion of the ground truth from the ego's perspective.

\vspace{1mm}

\begin{lemma}
    Given a ground truth $\mathbf{G}$ and two predictions $\mathbf{P}_1$ and $\mathbf{P}_2$ having the same size, i.e., $\mathsf{Area}(\mathbf{P}_1) = \mathsf{Area}(\mathbf{P}_2)$, and the same IoU with the ground truth, i.e., $\mathsf{IoU}(\mathbf{P}_1, \mathbf{G}) = \mathsf{IoU}(\mathbf{P}_2, \mathbf{G})$, if $\mathsf{EC\text{-}IoU}(\mathbf{P}_1, \mathbf{G}) > \mathsf{EC\text{-}IoU}(\mathbf{P}_2, \mathbf{G})$, then $\mathsf{Weighted\text{-}Area}_\mathbf{G}(\mathbf{P}_1 \cap \mathbf{G}) > \mathsf{Weighted\text{-}Area}_\mathbf{G}(\mathbf{P}_2 \cap \mathbf{G})$ 
    and $\iint_\mathbf{\mathbf{P}_1 \cap \mathbf{G}} \rho(x, y) \,dA < \iint_\mathbf{\mathbf{P}_2 \cap \mathbf{G}} \rho(x, y) \,dA$.
\label{lem:comparison}
\end{lemma}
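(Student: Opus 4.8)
The plan is to notice that the two hypotheses together pin down the overlap area, namely $\mathsf{Area}(\mathbf{P}_1\cap\mathbf{G})=\mathsf{Area}(\mathbf{P}_2\cap\mathbf{G})$, and then to read both conclusions off the definition of $\mathsf{EC\text{-}IoU}$ in Eq.~\eqref{eq:ec_iou} and the explicit form of $\omega_\mathbf{G}$ in Eq.~\eqref{eq:weighting}. (The case $\mathsf{IoU}(\mathbf{P}_i,\mathbf{G})=0$ makes both $\mathsf{EC\text{-}IoU}$ values $0$, so the hypothesis $\mathsf{EC\text{-}IoU}(\mathbf{P}_1,\mathbf{G})>\mathsf{EC\text{-}IoU}(\mathbf{P}_2,\mathbf{G})$ already forces both overlaps to be nonempty, which I will use later.) To obtain the equal-overlap fact I would write $\mathsf{IoU}(\mathbf{P}_i,\mathbf{G})=I_i/(\mathsf{Area}(\mathbf{G})+\mathsf{Area}(\mathbf{P}_i)-I_i)$ with $I_i\defeq\mathsf{Area}(\mathbf{P}_i\cap\mathbf{G})$, use $\mathsf{Area}(\mathbf{P}_1)=\mathsf{Area}(\mathbf{P}_2)$, and invoke that $t\mapsto t/(\mathsf{Area}(\mathbf{G})+\mathsf{Area}(\mathbf{P}_i)-t)$ is strictly increasing on the relevant interval, so equal IoU forces $I_1=I_2$.

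Given this, the first conclusion is immediate: the denominator of $\mathsf{EC\text{-}IoU}(\mathbf{P}_i,\mathbf{G})$ equals $\mathsf{Weighted\text{-}Area}_\mathbf{G}(\mathbf{G})+\mathsf{Area}(\mathbf{P}_i)-\mathsf{Area}(\mathbf{P}_i\cap\mathbf{G})$, which is the same strictly positive number for $i=1$ and $i=2$. Hence $\mathsf{EC\text{-}IoU}(\mathbf{P}_1,\mathbf{G})>\mathsf{EC\text{-}IoU}(\mathbf{P}_2,\mathbf{G})$ is equivalent to the corresponding inequality between the numerators, that is, $\mathsf{Weighted\text{-}Area}_\mathbf{G}(\mathbf{P}_1\cap\mathbf{G})>\mathsf{Weighted\text{-}Area}_\mathbf{G}(\mathbf{P}_2\cap\mathbf{G})$.

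For the second conclusion I would translate weighted areas into mean distances. Each $\mathbf{P}_i\cap\mathbf{G}$ is a nonempty compact connected polygon and $\omega_\mathbf{G},\rho$ are continuous on it, so the Mean Value Theorem for integrals supplies points $(\bar x_i,\bar y_i)\in\mathbf{P}_i\cap\mathbf{G}$ with $\mathsf{Weighted\text{-}Area}_\mathbf{G}(\mathbf{P}_i\cap\mathbf{G})=\omega_\mathbf{G}(\bar x_i,\bar y_i)\,I$. Substituting $\omega_\mathbf{G}(x,y)=[\rho(x_\mathbf{G},y_\mathbf{G})/\rho(x,y)]^{\alpha}$, which is strictly decreasing in $\rho(x,y)$ when $\alpha>0$, the inequality from the previous paragraph becomes $\rho(\bar x_1,\bar y_1)<\rho(\bar x_2,\bar y_2)$; reading the same representative point through $\iint_{\mathbf{P}_i\cap\mathbf{G}}\rho\,dA=\rho(\bar x_i,\bar y_i)\,I$ and multiplying by the common overlap area $I$ then gives $\iint_{\mathbf{P}_1\cap\mathbf{G}}\rho\,dA<\iint_{\mathbf{P}_2\cap\mathbf{G}}\rho\,dA$. (When $\alpha=0$ the weighting is trivial, the first conclusion collapses to an equality, and the hypothesis is vacuous.)

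The step I expect to be the main obstacle is exactly this last identification: the mean-value abscissa realizing $\mathsf{Weighted\text{-}Area}_\mathbf{G}$ and the one realizing $\iint\rho\,dA$ over the same region need not literally coincide, so the clean chain above is rigorous only up to the first-order expansion $\omega_\mathbf{G}(x,y)\approx 1-\alpha\,(\rho(x,y)-\rho(x_\mathbf{G},y_\mathbf{G}))/\rho(x_\mathbf{G},y_\mathbf{G})$, under which $\mathsf{Weighted\text{-}Area}_\mathbf{G}(\mathbf{P}_i\cap\mathbf{G})$ is, once $I$ is fixed, an affine strictly-decreasing functional of $\iint_{\mathbf{P}_i\cap\mathbf{G}}\rho\,dA$. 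This expansion is accurate precisely in the operational regime where a ground-truth box is far from the ego relative to its own size (the regime already highlighted by Lemma~\ref{lem:influence}), so I would state the second conclusion at that level and keep the first one exact.
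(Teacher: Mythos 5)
Your first two steps coincide exactly with the paper's proof: from $\mathsf{Area}(\mathbf{P}_1)=\mathsf{Area}(\mathbf{P}_2)$ and equal IoU you deduce $\mathsf{Area}(\mathbf{P}_1\cap\mathbf{G})=\mathsf{Area}(\mathbf{P}_2\cap\mathbf{G})$, hence the denominators in Eq.~\eqref{eq:ec_iou} agree and are positive, hence the ordering of EC-IoU values transfers to the numerators. That part is complete and is the same route the paper takes; the first conclusion is fully proved.

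For the second conclusion, the obstacle you flag is real, and you should know that the paper's own proof does not overcome it either: the paper passes from
\[
\iint_{\mathbf{P}_1\cap\mathbf{G}}\rho(x,y)^{-\alpha}\,dA \;>\; \iint_{\mathbf{P}_2\cap\mathbf{G}}\rho(x,y)^{-\alpha}\,dA
\]
to the reversed ordering of $\iint\rho\,dA$ by ``eliminating the positive constant and considering $\alpha\geq 0$,'' but pointwise monotonicity of $t\mapsto t^{-\alpha}$ does not transfer through integration over two different regions. A region on which $\rho$ mixes very small and very large values can simultaneously have a larger integral of $\rho^{-\alpha}$ (driven by the small distances) and a larger integral of $\rho$ (driven by the large ones) than an equal-area region with concentrated $\rho$: with $\alpha=1$, compare $\rho\equiv 2$ on a unit-area region ($\iint\rho^{-1}=0.5$, $\iint\rho=2$) against $\rho=1$ on half the area and $\rho=10$ on the other half ($\iint\rho^{-1}=0.55$, $\iint\rho=5.5$); both integrals are larger, so the claimed implication fails as a general inference. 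Your mean-value formulation suffers from exactly the mismatch you describe---the representative points for $\omega_\mathbf{G}$ and for $\rho$ need not coincide---so as written neither your argument nor the paper's establishes the second inequality in full generality. Your proposed repair, restricting to the first-order regime where $\omega_\mathbf{G}$ is an affine decreasing function of $\rho$ (the far-field regime already singled out by Lemma~\ref{lem:influence}), does yield the stated monotone relation there, but it proves a weaker, approximate version of the second conclusion rather than the lemma as stated. In short: first conclusion, correct and identical to the paper; second conclusion, you have correctly located a gap that the paper itself glosses over.
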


\vspace{2mm}
\begin{proof}
    Since $\mathsf{Area}(\mathbf{P}_1) = \mathsf{Area}(\mathbf{P}_2)$ and $\mathsf{IoU}(\mathbf{P}_1, \mathbf{G}) = \mathsf{IoU}(\mathbf{P}_2, \mathbf{G})$, based on the definition of IoU in Eq.~\eqref{eq:iou}, one can derive that $\mathsf{Area}(\mathbf{P}_1 \cap \mathbf{G}) = \mathsf{Area}(\mathbf{P}_2 \cap \mathbf{G})$. Therefore, $\mathsf{Weighted\text{-}Area}_\mathbf{G}(\mathbf{G}) + \mathsf{Area}(\mathbf{P}_1) - \mathsf{Area}(\mathbf{P}_1 \cap \mathbf{G}) = \mathsf{Weighted\text{-}Area}_\mathbf{G}(\mathbf{G}) + \mathsf{Area}(\mathbf{P}_2) - \mathsf{Area}(\mathbf{P}_2 \cap \mathbf{G})$, i.e., the denominator in Eq.~\eqref{eq:ec_iou} is the same for $\mathbf{P}_1$ and $\mathbf{P}_2$. Then, since $\mathsf{EC\text{-}IoU}(\mathbf{P}_1, \mathbf{G}) > \mathsf{EC\text{-}IoU}(\mathbf{P}_2, \mathbf{G})$, it follows that $\mathsf{Weighted\text{-}Area}_\mathbf{G}(\mathbf{P}_1 \cap \mathbf{G}) > \mathsf{Weighted\text{-}Area}_\mathbf{G}(\mathbf{P}_2 \cap \mathbf{G})$ at the numerator. 

    Then, using the definition of $\mathsf{Weighted\text{-}Area}_\mathbf{G}(\cdot)$ and $\omega_\mathbf{G}(\cdot, \cdot)$, we obtain: 
    \begin{equation}
        \iint_\mathbf{\mathbf{P}_1 \cap \mathbf{G}} \left[ \frac{\rho(x_\mathbf{G}, y_\mathbf{G})}{\rho(x, y)} \right]^{\alpha} \,dA  > \iint_\mathbf{\mathbf{P}_2 \cap \mathbf{G}}\left[ \frac{\rho(x_\mathbf{G}, y_\mathbf{G})}{\rho(x, y)} \right]^{\alpha} \,dA.
    \end{equation}
    Eliminating the positive constant $\rho(x_\mathbf{G}, y_\mathbf{G}) ^ \alpha$ and considering $\alpha \geq 0$, we reach $\iint_\mathbf{\mathbf{P}_1 \cap \mathbf{G}} \rho(x, y) \,dA < \iint_\mathbf{\mathbf{P}_2 \cap \mathbf{G}} \rho(x, y) \,dA$.
\end{proof}

\begin{figure}[t]
    \centering
    \includegraphics[width=0.9\linewidth]{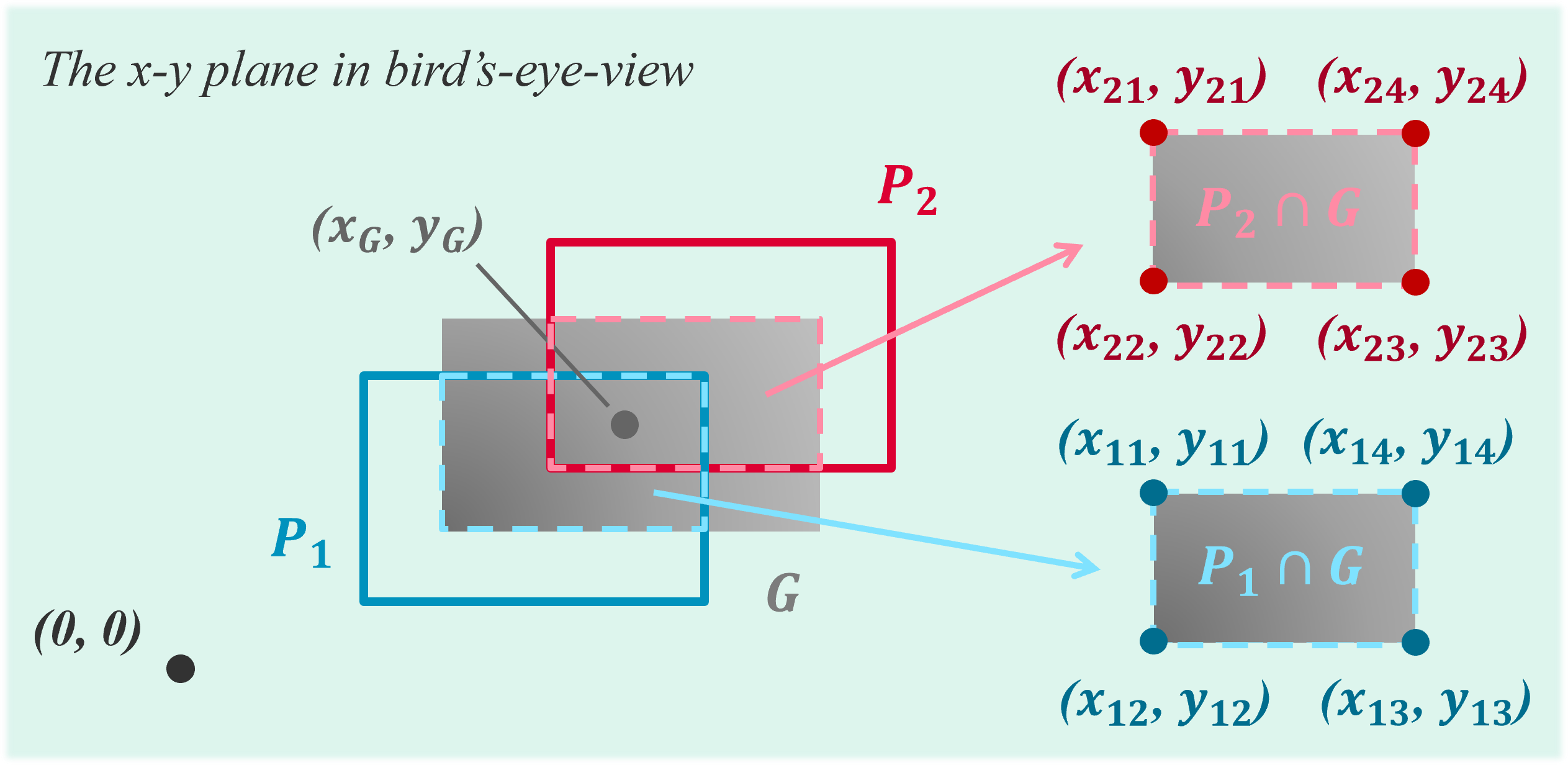}
    \caption{An example showing a prediction $\mathbf{P}_1$ will be favored by EC-IoU over another prediction $\mathbf{P}_2$ thanks to its better coverage on the safety-critical portion of the ground truth $\mathbf{G}$. As depicted by the gradient effect in $\mathbf{G}$, safety criticality is defined based on point distances to the origin; the darker, the more critical.}
    \label{fig:example}
    \vspace{-5mm}
\end{figure}

\vspace{2mm}

In layman's terms, compared to prediction $\mathbf{P}_2$, prediction $\mathbf{P}_1$ with a higher EC-IoU score collects more of the important points in the ground truth. Fig.~\ref{fig:example} provides an illustration, in which two predictions $\mathbf{P}_1$ and $\mathbf{P}_2$ attempt to match the ground truth and reach the same IoU. However, by considering the relative position of the ground truth with respect to the ego agent and, thereby, the proposed weighting mechanism $\omega_\mathbf{G}(x, y)$, we enable EC-IoU to favor the blue box (prediction $\mathbf{P}_1$) over the red one (prediction $\mathbf{P}_2$). 

One remaining challenge, however, is the computation of the weighted area of a polygon defined in Eq.~\eqref{eq:weighted_area}. Unlike the area of an unweighted polygon in Eq.~\eqref{eq:area}, which can be obtained by the closed-form Shoelace Formula via Green's Theorem, the weighted extension is hard to compute due to the variable weights and area boundaries (i.e., integral with variable limits). Hence, in the following, we introduce a method to efficiently approximate the weighted area of a polygon and, accordingly, the EC-IoU measure.

\subsection{Computing Weighted Areas and EC-IoU}
\label{subsec:computation}

The weighted area in Eq.~\eqref{eq:weighted_area} is essentially an integral of the weighting function over a variable region, which is hard to derive into a closed-form expression. Generally, one can employ numerical methods such as \textit{Monte Carlo integration} to compute such integrals. Our approach, inspired by the Mean Value Theorem, is to decompose the weighting and the calculation of the region's area. Essentially, the Mean Value Theorem provides that for a convex polygon~$\mathbf{D}$, $\exists \, (x_m, y_m) \in \mathbf{D}$: 
\begin{align}
\begin{split}
    & \mathsf{Weighted\text{-}Area}_\mathbf{G}(\mathbf{D}) = \iint_\mathbf{D} \omega_\mathbf{G}(x, y) \,dA \\
    & = \omega_\mathbf{G}(x_m, y_m) \cdot \iint_\mathbf{D} 1 \,dA \\
    & = \omega_\mathbf{G}(x_m, y_m) \cdot \mathsf{Area}(\mathbf{D}).
    \label{eq:mean_value}
\end{split}
\end{align}

Finding $(x_m, y_m)$ for weighting, unfortunately, is also highly non-trivial. Henceforth, we compute an approximation of such a mean weight $\omega_\mathbf{G}(x_m, y_m)$ by considering the nature of the convex polygon $\mathbf{D}$.
Effectively, we weigh the vertices of $\mathbf{D}$ and then find their central tendency, for which typical choices include the \emph{arithmetic mean} and \emph{geometric mean}. As the former is more suitable for cases where the underlying values do not differ drastically, we propose to use the latter to keep the result less sensitive to outlying vertices in the case of larger ground truths. Formally, it is defined as:
\begin{align}
    \omega_\mathbf{G,D} \defeq \left( \prod_{i=1}^{m} \omega_\mathbf{G}(x^\mathbf{D}_i, y^\mathbf{D}_i) \right)^{(1/m)},
    \label{eq:omega_geom}
\end{align}
for a polygon $\mathbf{D}$ with $m$ vertices. Taking again Fig.~\ref{fig:example} as an example, for the vertices of $\mathbf{P}_1 \cap \mathbf{G}$ and $\mathbf{P}_2 \cap \mathbf{G}$, their weights follow $\omega_\mathbf{G}(x_{1i}, y_{1i}) > \omega_\mathbf{G}(x_{2i}, y_{2i})$, where $i=1,2,3,4$. Therefore, applying the geometric mean (or the same interpolation scheme) to the vertex weights of $\mathbf{P}_1 \cap \mathbf{G}$ and $\mathbf{P}_2 \cap \mathbf{G}$ leads to $\omega_\mathbf{G,\mathbf{P}_1 \cap \mathbf{G}} > \omega_\mathbf{G,\mathbf{P}_2 \cap \mathbf{G}}$ and, hence, $\mathsf{Weighted\text{-}Area}_\mathbf{G}(\mathbf{P}_1 \cap \mathbf{G}) > \mathsf{Weighted\text{-}Area}_\mathbf{G}(\mathbf{P}_2 \cap \mathbf{G})$.

\begin{figure}[t]
\centering
    \includegraphics[width=0.65\linewidth]{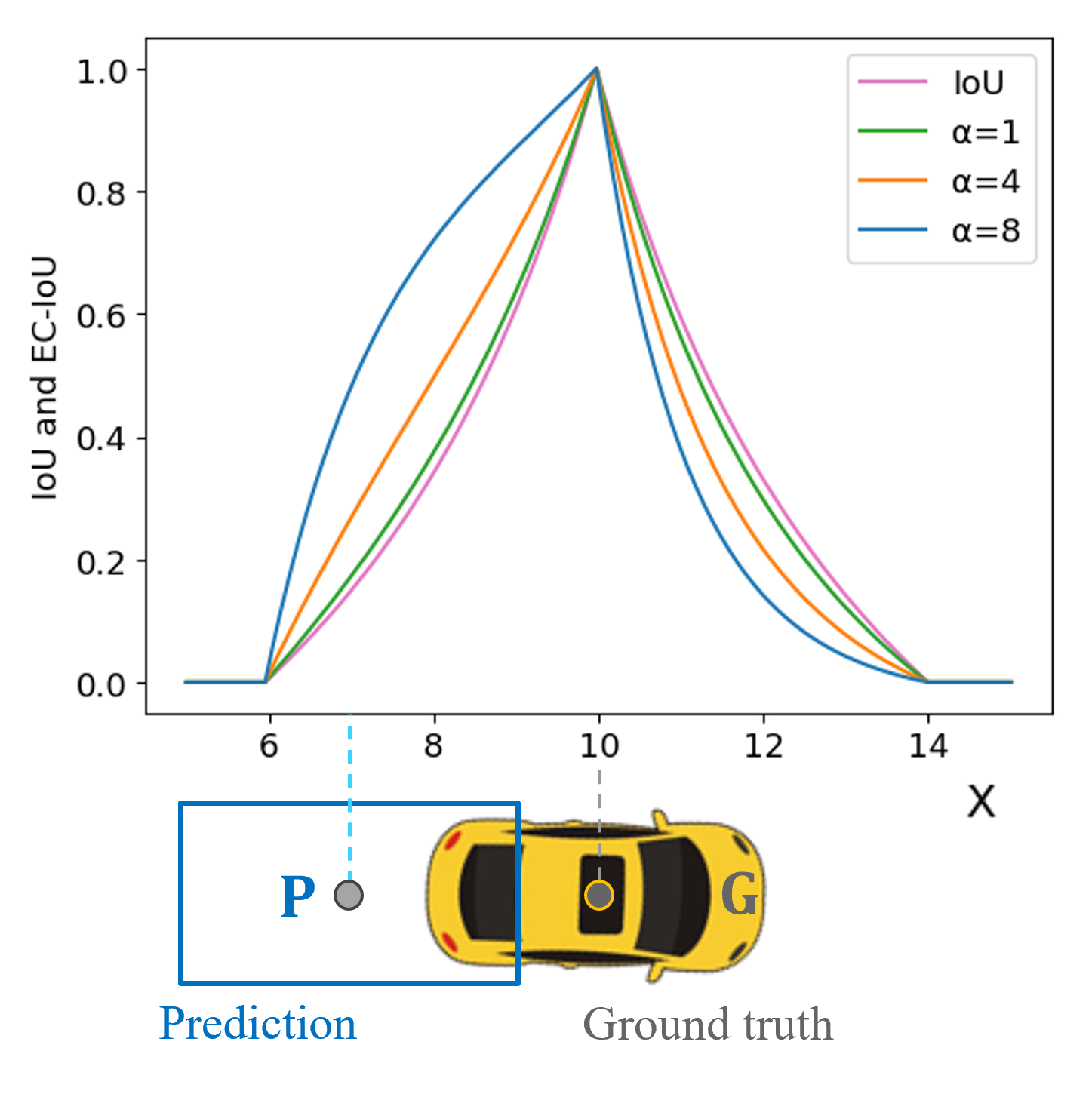}
        \vspace{-3mm}
    \caption{IoU and EC-IoU with various~$\alpha$, computed using the geometric mean, for predictions centered along the x-axis. We assume the ego vehicle is located at~$x=0$ and the ground truth~$\mathbf{G}$ at $x=10$. The blue box depicts a prediction~$\mathbf{P}$ centered at~$x=7$.}
    \label{fig:ec_iou_valuation}
    \vspace{-5mm}
\end{figure}

As such, we can compute EC-IoU using $\omega_{\mathbf{G},\mathbf{D}}$ as an approximation to $\omega_\mathbf{G}(x_m, y_m)$. Fig.~\ref{fig:ec_iou_valuation} illustrates a simulated result where we create a ground truth vehicle $\mathbf{G} = (10, 0, 4, 2, 0)$ and shift a prediction of the same dimension from $(5, 0)$ to $(15, 0)$, assuming the ego at $(0, 0)$. As shown, EC-IoU curves are higher than IoU for $x_\mathbf{P} \in [6, 10)$ and lower for $x_\mathbf{P} \in (10, 14]$. This indicates that, when compared to IoU, EC-IoU gives higher scores for predictions closer to the ego and discourages farther ones. The difference increases with larger values of $\alpha$.

While the EC-IoU in Fig.~\ref{fig:ec_iou_valuation} is computed using the geometric mean, we take the extreme $\alpha=8$ case and demonstrate in Fig.~\ref{fig:eciou_with_different_methods} the result of applying the arithmetic mean as well as the original curve generated by Monte Carlo numerical integration. The result illustrates the approximations with the geometric and arithmetic mean lead to highly similar curves and the error of the geometric mean is overall smaller. Finally, we note that using approximation may occasionally lead to cases where the computed EC-IoU exceeds~$1$. We observe such cases when setting $\alpha>16$ in Eq.~\eqref{eq:weighting}, for which we simply clamp the EC-IoU value so that the computed result remains within~$[0, 1]$.

\begin{figure}[t]
    \centering
    \includegraphics[width=0.65\linewidth]{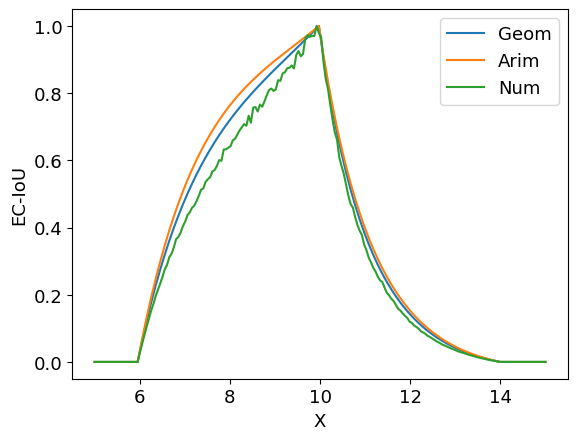}
    \caption{For EC-IoU with $\alpha=8$, under the same configuration as Fig.~\ref{fig:ec_iou_valuation}, we compare the curves produced by (1) geometric mean approximation ($\mathsf{Geom}$), (2) arithmetic mean approximation ($\mathsf{Arim}$), and (3) solving the original function via Monte Carlo numerical integration ($\mathsf{Num}$) with~$6000$ random samples for every prediction centered at~$x$. }
    \label{fig:eciou_with_different_methods}
\end{figure}

\subsection{Complexity and Usage}
\label{subsec:complexity}
In this final subsection, we analyze the time complexity of computing EC-IoU briefly and describe how it can be integrated into general model evaluation and optimization pipelines.

Given $n$ pairs of prediction $\mathbf{P}$ and ground truth $\mathbf{G}$, the ordinary IoU computation involves several steps. First, it starts with finding the vertices of $\mathbf{P} \cap \mathbf{Q}$, which is at the worst case $\mathcal{O}(n^3)$ for modern algorithms~\cite{berg2008computational}. Then, it sorts the vertices in order, taking another $\mathcal{O}(n \log n)$ for well-known techniques (e.g., merge sort). Lastly, as mentioned, Shoelace Formula can be applied to attain the area within $\mathcal{O}(n)$. Considering now the additional weighting step for EC-IoU, i.e., Eq.~\eqref{eq:omega_geom}, only another $\mathcal{O}(n)$ is needed because there are at most 8 intersecting vertices for each pair of prediction and ground truth and the weighting of a point takes constant time. As a result, the overall computation time of EC-IoU should be comparable to that of IoU.

EC-IoU can be employed in typical object detection evaluation protocols in two straightforward ways. First, similar to the nuScenes true-positive metrics~\cite{caesar2020nuscenes}, it can be used as a direct metric when assessing matched pairs of predictions and ground truths. Second, for protocols that match predictions and ground truths through IoU-based affinity, e.g., the KITTI benchmark~\cite{geiger2012are}, EC-IoU naturally provides an additional option and leads to weighted Average Precision (AP) metrics for safety characterization. Correspondingly, to improve such safety-oriented performance explicitly, EC-IoU can be integrated into common loss functions for model optimization. For example, considering the vanilla IoU~\cite{yu2016unitbox} and the more advanced DIoU~\cite{zheng2019distanceiou} and EIoU~\cite{zhang2022focal}, 
\begin{equation}\label{eq:iou_loss}
\begin{split}
    & L_{\{\textsf{-}/\textsf{D}/\textsf{E}\}\textsf{IoU}} \defeq 1 - \mathsf{IoU} + R_{\{\textsf{-}/\textsf{D}/\textsf{E}\}\textsf{IoU}}, \\ 
\end{split}
\end{equation}
we adapt them with EC-IoU:
\begin{equation}\label{eq:ec_iou_loss}
L_{\textsf{EC}\text{-}\{\textsf{-}/\textsf{D}/\textsf{E}\}\textsf{IoU}} \defeq 1 - \mathsf{EC\text{-}IoU} + R_{\{\textsf{-}/\textsf{D}/\textsf{E}\}\textsf{IoU}}, 
\end{equation}
where $\{\textsf{-}/\textsf{D}/\textsf{E}\}\textsf{IoU}$ denote the three cases IoU, DIoU, and EIoU, and $R_{\{\textsf{-}/\textsf{D}/\textsf{E}\}\textsf{IoU}}$ their respective regularization terms. In Eq.~\eqref{eq:iou_loss} and Eq.~\eqref{eq:ec_iou_loss}, the notations for the prediction~$\mathbf{P}$ and the ground truth~$\mathbf{G}$ are omitted for simplicity, and we take $\alpha=1$ in the EC-IoU-variants.

Finally, we note that for benchmarks supporting annotations with 3D bounding boxes (i.e., having the~$z$ coordinate and height~$h$ along the gravity axis in addition to the 2D representation tuple given in Sec.~\ref{sec:preliminaries}), EC-IoU can also be extended into 3D cases. Concretely, since the weighting mechanism does not concern the gravity axis, we simply multiply (weighted) areas with their heights to get volumes and compute the 3D measure. In consequence, we can address both BEV and 3D scenarios, as shown in the following experiments.

\section{Experimental Results and Discussions}
\label{sec:exp}

This section presents and discusses our experimental results, with both synthetic and real-world datasets.

\subsection{Benchmarking EC-IoU-Based Loss Functions}
\label{subsec:opt_sim}

\begin{wrapfigure}[12]{r}{0.48\linewidth}
\vspace{-4mm}
    \centering
    \includegraphics[width=\linewidth]{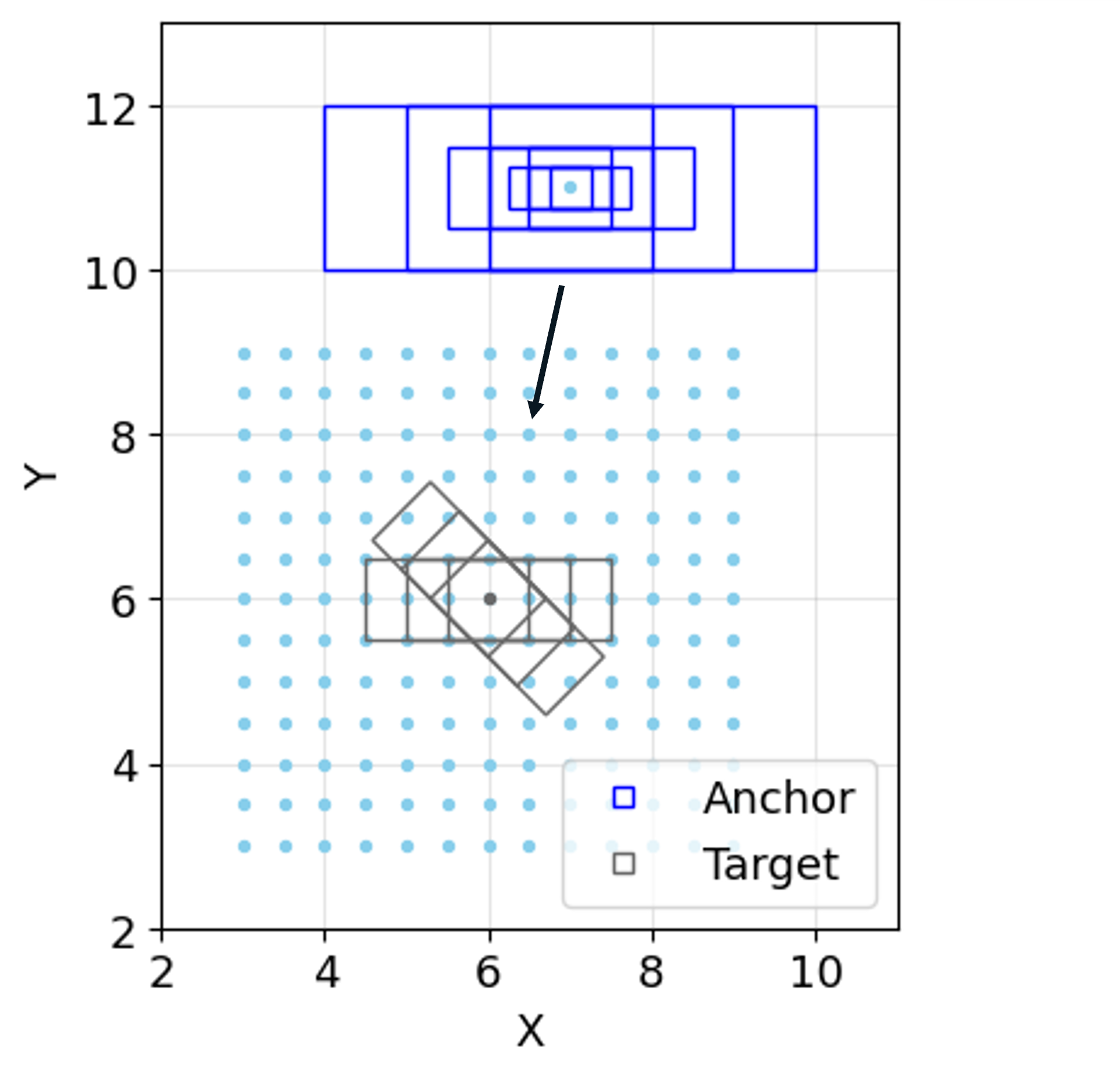}
    \caption{Simulation setup.}
    \label{fig:opt_sim_setup}
\end{wrapfigure}
We begin with a simulation experiment to inspect the EC-IoU-based loss functions and their counterparts. Similar to the prior art~\cite{zheng2019distanceiou,zhang2022focal}, we create anchors and targets on the $x$-$y$ plane and simulate a bounding box regression process. As shown in Fig.~\ref{fig:opt_sim_setup}, 6 targets are placed at $(6,6)$ with different dimensions ($(l, w)$ being $(1,1)$, $(2,1)$, or $(3,1)$) and orientations ($\theta$ being $0$ or $\pi/4$). Then, $13 \times 13$ points are sampled grid-wise within a 6-by-6 square centered at $(6, 6)$. Each point marks a set of $3 \times 3$ anchors, generated by the combination of three aspect ratios ($(1,1)$, $(2,1)$, and $(3,1)$) and three scales ($0.5$, $1$, and $2$). With the setup, we optimize the anchors towards the targets one at a time via gradient descent with 180 iterations, resulting in $9*6*13*13=9626$ regression cases. Finally, the process is repeated for all six loss functions given in Eq.~\ref{eq:iou_loss} and Eq.~\ref{eq:ec_iou_loss}. For pseudocode of the optimization process, readers may refer to~\cite{zheng2019distanceiou,zhang2022focal}.

In Fig.~\ref{fig:optim_sim}, we plot the performance of all six loss functions in terms of IoU and EC-IoU scores (with $\alpha=4$) across the 180 iterations. Three key observations are extracted as follows: (1) In EC-IoU assessment, the proposed EC-IoU-based loss functions, especially $L_{\textsf{EC}\text{-}\textsf{DIoU}}$, retain higher scores than the baselines throughout the optimization process. We note that the basic $L_{\textsf{EC}\text{-}\textsf{IoU}}$ performs only marginally better than $L_{\textsf{IoU}}$ here and conduct an ablation of them in Sec.~\ref{subsec:detector_finetuning}. Still, while it is possible that IoU-based optimizer achieves with more iterations a good EC-IoU (when the anchors finally arrive at the targets), using EC-IoU-based ones boost the score faster. (2) In IoU assessment, the EC-IoU-based loss functions achieve comparable performance with the benchmarks, although they exhibit certain jumps during optimization. These jumps of EC-IoU-based loss functions in IoU scores are likely caused by the nature of the functions. Referring to Fig.~\ref{fig:ec_iou_valuation}, when a prediction is optimized by EC-IoU asymptotically to the target from the left, i.e., $x_\mathbf{P} \in [9,10]$, the IoU score increases drastically with its steep profile. From the other side, i.e., $x_\mathbf{P} \in [10,11]$, as the EC-IoU profile itself is steep and the gradient is large, the IoU score will be quickly lifted also. Hence, the jumps are not caused by learning instability but by the underlying functions. (3) With the EC-IoU curves reaching $0.6$\textasciitilde$0.8$ differently, Fig.~\ref{fig:optim_sim} confirms that it can better differentiate models that have saturating and similar IoU scores ($\approx 0.5$).

\begin{figure}
    \centering
    \includegraphics[width=\linewidth]{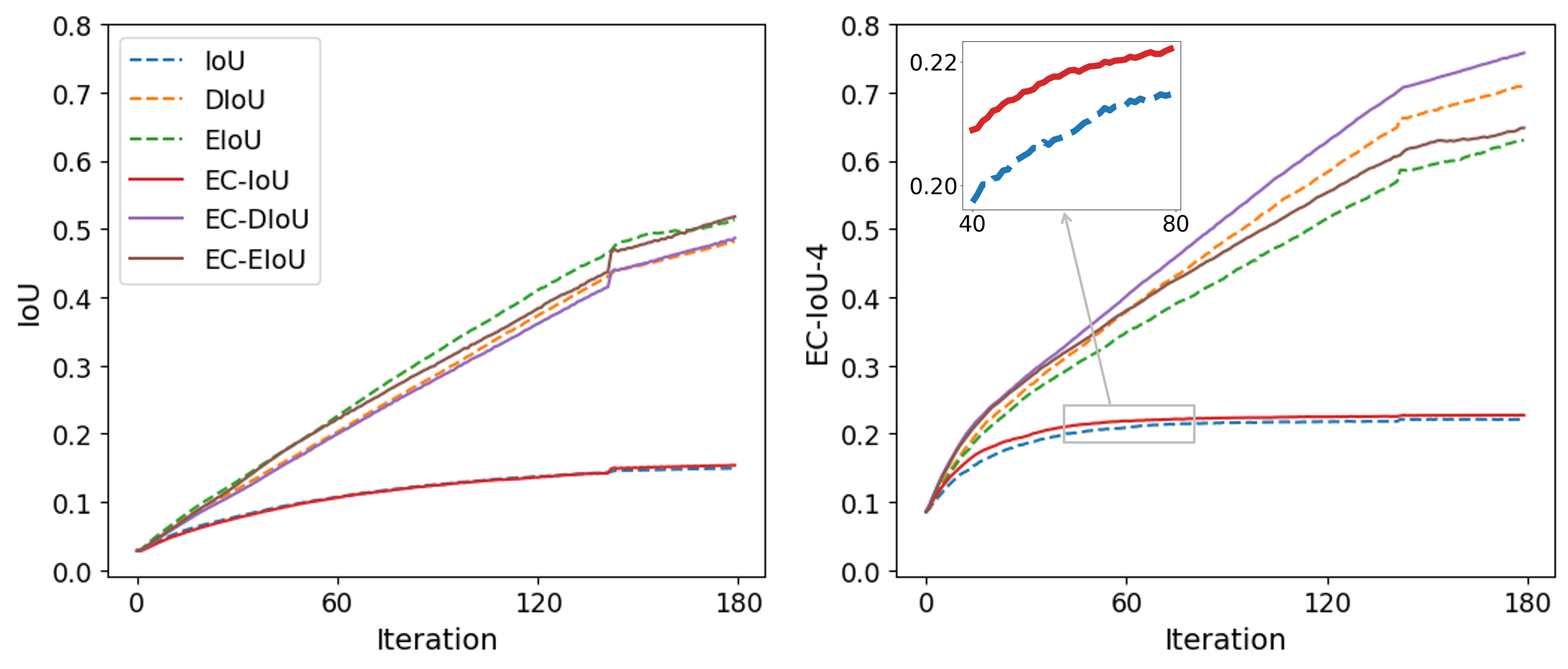}
    \caption{The optimization results of the proposed EC-IoU-based loss functions and their counterparts.} 
    \label{fig:optim_sim}
    \vspace{-5mm}
\end{figure}

\subsection{Real-World Object Detector Evaluation}
We now extend our experiment with real-world datasets, first evaluating popular object detectors using the nuScenes~\cite{caesar2020nuscenes} and KITTI~\cite{geiger2012are} benchmarks.

\begin{figure*}
    \centering
    \begin{minipage}[]{0.49\textwidth}
        \begin{minipage}[b]{0.57\linewidth}
            \includegraphics[width=\linewidth]{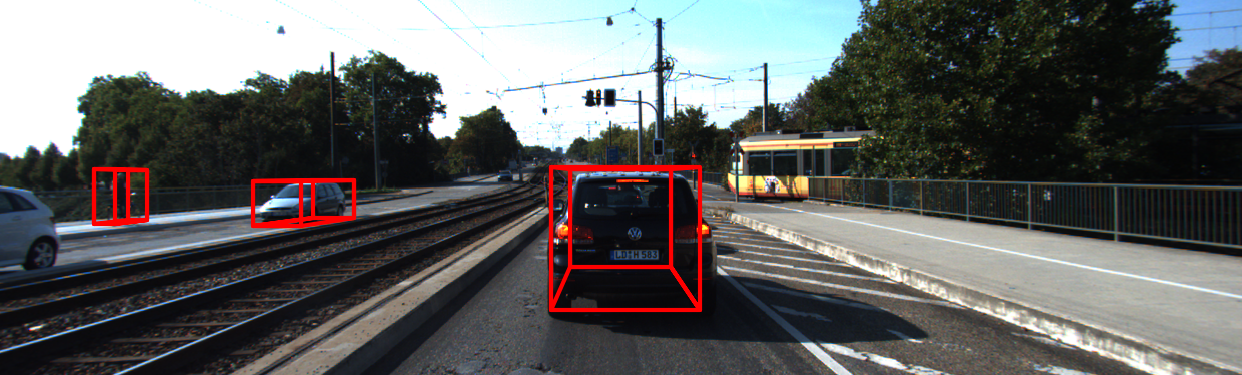} \vspace{3mm}
            \includegraphics[width=\linewidth]       {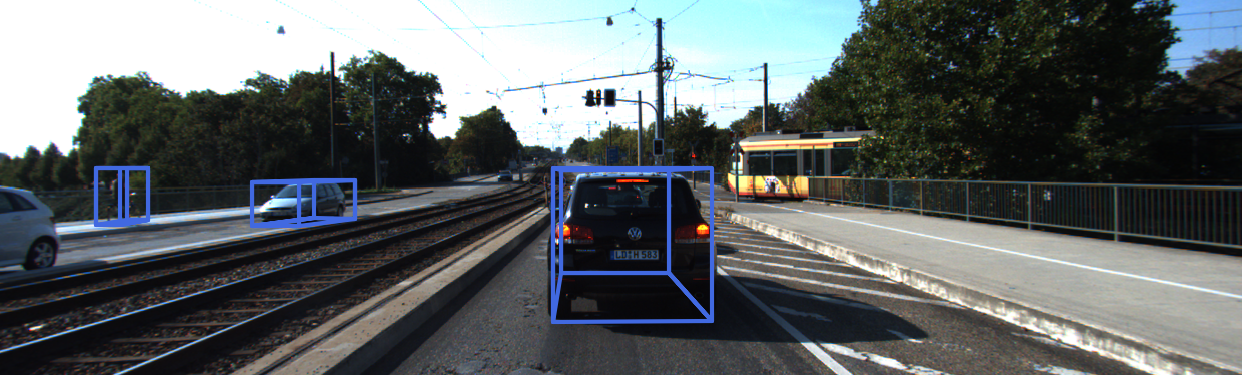}    
        \end{minipage}
        \includegraphics[width=0.41\linewidth]{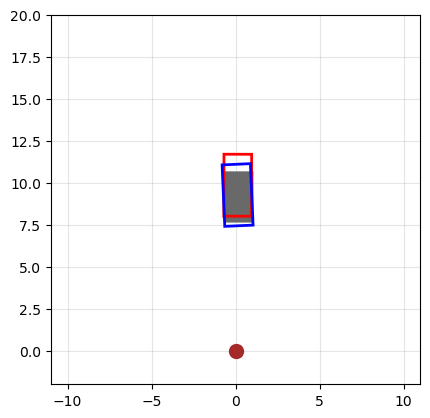}
    \end{minipage}
    \begin{minipage}[]{0.49\textwidth}
        \begin{minipage}[b]{0.57\linewidth}
            \includegraphics[width=\linewidth]{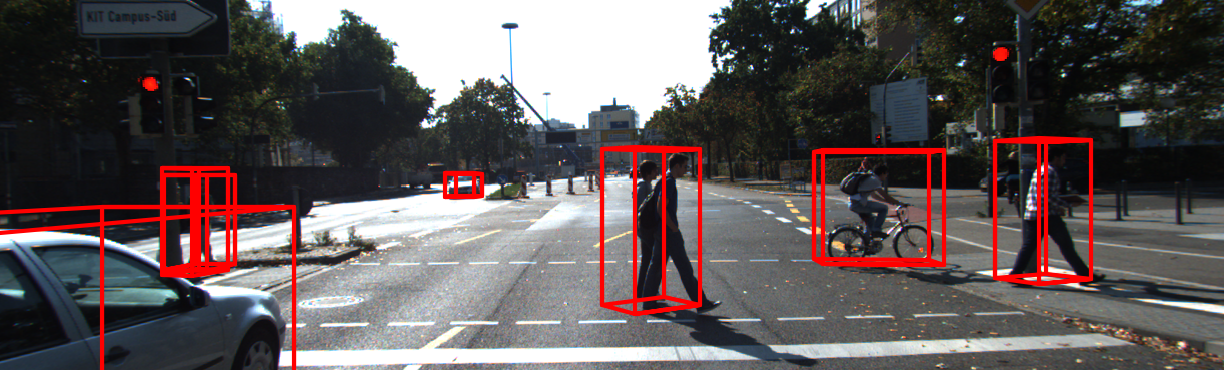} \vspace{3mm}
            \includegraphics[width=\linewidth]       {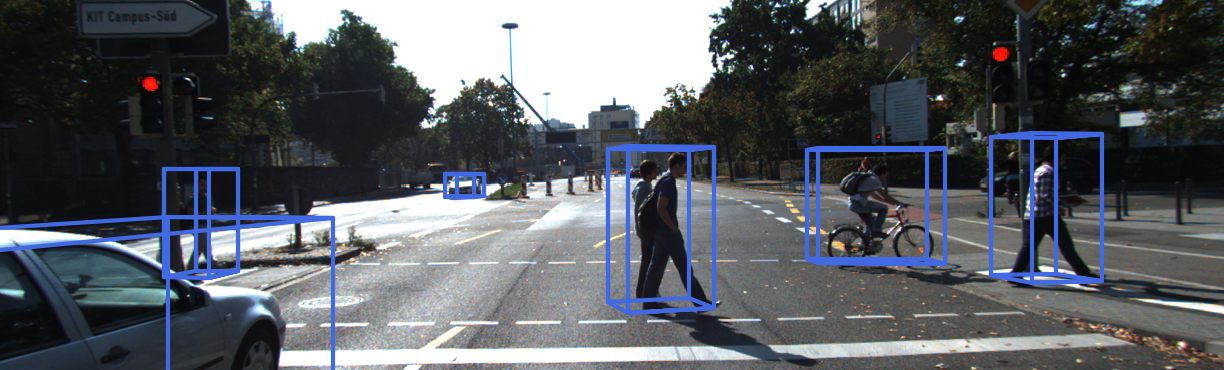}    
        \end{minipage}
        \includegraphics[width=0.41\linewidth]{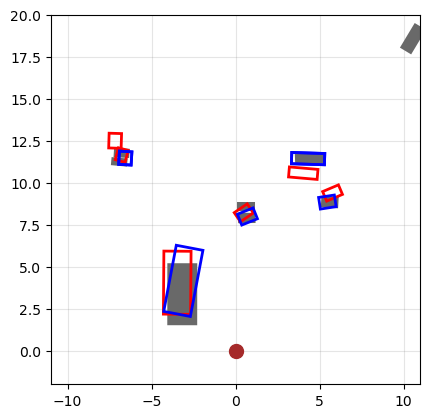}
    \end{minipage}
    \caption{Qualitative results of fine-tuning PDG~\cite{wang2021probabilistic} using the KITTI dataset~\cite{geiger2012are}. Each of the two scenes consists of \textit{(upper left)} red predictions by $L_\mathsf{IoU}$, \textit{(lower left)} blue predictions by $L_\mathsf{EC\text{-}IoU}$, and \textit{(right)} a BEV examination of the scene with both red and blue predictions as well as gray ground-truth boxes. The left scene demonstrates a case with more proper object coverage, whereas the right shows the possibility of further aligning the predictions towards the objects.}
    \label{fig:kitti_results}
\end{figure*}

\begin{table*}[]
\centering
\caption{Model evaluation results in the nuScenes dataset~\cite{caesar2020nuscenes} (Ped.=Pedestrian). The truck class has substantially lower EC-IoU scores from all models, indicating the difficulty of capturing it fully from the ego's point of view.}
\label{tab:nuscenes}
\begin{tabular}{l|c|c|cc|cc|cc}
\hline
\multirow{2}{*}{Model} & \multirow{2}{*}{Modality} & \multirow{2}{*}{NDS} & \multicolumn{2}{c|}{Car}           & \multicolumn{2}{c|}{Truck}                & \multicolumn{2}{c}{Ped.}           \\ \cline{4-9} 
                       &                       &                      & \multicolumn{1}{c|}{IoU}  & EC-IoU & \multicolumn{1}{c|}{IoU}  & EC-IoU        & \multicolumn{1}{c|}{IoU}  & EC-IoU \\ \hline
SSN~\cite{zhu2020ssn}                    & Lidar                  & 45.49                & \multicolumn{1}{c|}{0.74} & 0.73   & \multicolumn{1}{c|}{0.74} & \textit{0.61} & \multicolumn{1}{c|}{0.53} & 0.54   \\
CenterPoint~\cite{yin2021center}            & Lidar                  & 54.32                & \multicolumn{1}{c|}{0.76} & 0.74   & \multicolumn{1}{c|}{0.75} & \textit{0.67} & \multicolumn{1}{c|}{0.54} & 0.56   \\ \hline
FCOS3D~\cite{wang2021fcos3d}                 & Camera                  & 30.83                & \multicolumn{1}{c|}{0.60} & 0.61   & \multicolumn{1}{c|}{0.68} & \textit{0.53} & \multicolumn{1}{c|}{0.21} & 0.23   \\
PGD~\cite{wang2021probabilistic}                    & Camera                  & 31.49                & \multicolumn{1}{c|}{0.62} & 0.66   & \multicolumn{1}{c|}{0.71} & \textit{0.53} & \multicolumn{1}{c|}{0.23} & 0.25   \\ \hline
\end{tabular}
\end{table*}

\begin{table*}[]
\caption{Model evaluation and fine-tuning results in the KITTI dataset~\cite{geiger2012are} (Ped.=Pedestrian; Std.=Standard). Generally, the results from EC-IoU loss are not only safer in terms of EC-AP but also more accurate in terms of standard-AP.}
\label{tab:kitti}
\centering
\begin{tabular}{l|c|ccll|ccll|ccll}
\hline
\multirow{2}{*}{Model} & \multirow{2}{*}{Modality} & \multicolumn{4}{c|}{Ped. AP40@0.5}                      & \multicolumn{4}{c|}{Car AP40@0.5}                        & \multicolumn{4}{c}{mAP40}                              \\ \cline{3-14} 
                       &                       & \multicolumn{1}{c|}{Std.}  & \multicolumn{3}{c|}{EC} & \multicolumn{1}{c|}{Std.}   & \multicolumn{3}{c|}{EC} & \multicolumn{1}{c|}{Std.}  & \multicolumn{3}{c}{EC} \\ \hline
SMOKE~\cite{liu2020smoke}                  & Camera                  & \multicolumn{1}{c|}{3.57} & \multicolumn{3}{c|}{4.65}   & \multicolumn{1}{c|}{12.02} & \multicolumn{3}{c|}{16.83}  & \multicolumn{1}{c|}{5.35} & \multicolumn{3}{c}{7.48}   \\
PGD~\cite{wang2021probabilistic}                    & Camera                  & \multicolumn{1}{c|}{4.2}  & \multicolumn{3}{c|}{\textbf{5.59}}   & \multicolumn{1}{c|}{12.82} & \multicolumn{3}{c|}{\textit{14.48}}  & \multicolumn{1}{c|}{5.93} & \multicolumn{3}{c} {\textit{7.23}}   \\ \hdashline
+ $L_{\textsf{IoU}}$            & Camera                  & \multicolumn{1}{c|}{{3.72}} & \multicolumn{3}{c|}{{4.45}}   & \multicolumn{1}{c|}{13.58} & \multicolumn{3}{c|}{17.95 (+23.6\%)}  & \multicolumn{1}{c|}{7.21} & \multicolumn{3}{c}{9.36 (+29.5\%)}   \\
+ $L_{\textsf{EC}\text{-}\textsf{IoU}}$         & Camera                  & \multicolumn{1}{c|}{\textbf{4.5}}  & \multicolumn{3}{c|}{{5.26}}   & \multicolumn{1}{c|}{\textbf{14.63}} & \multicolumn{3}{c|}{\textbf{18.45} (+27.4\%)}  & \multicolumn{1}{c|}{\textbf{7.42}} & \multicolumn{3}{c}{\textbf{10.07} (+39.3\%)}  \\ \hline
\end{tabular}
    \vspace{-5mm}
\end{table*}

\subsubsection{nuScenes}

As described in Sec.~\ref{subsec:complexity}, having matched the predictions to the closest ground truths, nuScenes computes the NuScenes Detection Score (NDS) and a set of True-Positive (TP) metrics such as translational and rotational errors~\cite{caesar2020nuscenes}. As such, IoU and EC-IoU can naturally be two extra metrics to reflect the overall spatial relation between predictions and ground truths.

For implementation, we utilize the MMDetection3D platform~\cite{mmdet3d2020} and test top-performing models, including two lidar-based and two camera-based ones. Tab.~\ref{tab:nuscenes} summarizes the evaluation results in NDS and the TP IoU and EC-IoU of three object classes, ``car," ``truck," and ``pedestrian." The results show that both IoU and EC-IoU are positively correlated with NDS. Notably, from all models, we see a substantial drop in EC-IoU, compared to IoU, for ``truck". This implies that EC-IoU offers an additional assessment dimension that signifies if a specific class is undermined in terms of safety, In this case, ``turck" is the outstanding class, potentially due to its larger size being difficult to properly cover from the ego's angle. Accordingly, one should take mitigations, e.g., by enlarging the predictions with a certain factor~\cite{schuster2022unaligned,degrancey2022object}.

\subsubsection{KITTI}

Different from the neScenes benchmark, KITTI first matches the sets of predictions and ground truths using IoU with predefined thresholds and then calculates the standard AP by counting true-positive and false-negative predictions~\cite{geiger2012are}. We supplement IoU with EC-IoU in this process, resulting in a parallel EC-AP metric. Essentially, by doing so, predictions not achieving sufficient EC-IoU will be directly removed, and the ones with higher EC-IoU will be preferred during matching.

We focus on two popular camera-based models that are available on the MMDetection3D platform~\cite{mmdet3d2020}, including SMOKE~\cite{liu2020smoke} and PGD~\cite{wang2021probabilistic}. The results are organized in the upper two rows of Table~\ref{tab:kitti}. Due to the small number of samples, we do not report the ``cyclist" class. For ``pedestrian" and ``car," we follow the official protocol to use strict thresholds (0.5 and 0.7, respectively) and report AP40 of the moderate category~\cite{geiger2012are}.

From Table~\ref{tab:kitti}, we see that the more advanced PGD indeed achieves higher scores than SMOKE mostly. However, for the ``car" class and the overall mAP, it has a lower EC-AP score, as marked in italic font. This indicates that while PGD can generally locate objects better than SMOKE, it does not necessarily place its predictions ahead of the objects from the ego's point of view. Observing the phenomenon, we now perform fine-tuning on PGD and discuss the result in the following section.

\subsection{Real-World Object Detector Fine-Tuning}
\label{subsec:detector_finetuning}

For fine-tuning the PGD object detector, we apply the EC-IoU loss function, $L_{\textsf{EC}\text{-}\textsf{IoU}}$. As mentioned in Sec.~\ref{subsec:opt_sim}, we also employ its counterpart $L_{\textsf{IoU}}$ to benchmark their performance. We do not use the DIoU or EIoU variants, as fine-tuning starts from a relatively well-performing model, foregoing the necessity of strong regularization signals (as well as potential noises). 

For implementation, since the original prediction head of the PGD object detector produces an image-based representation of 3D bounding boxes, we append to it a transformation function and attain the typical 3D representation for all predictions, i.e., $(x_\mathbf{P}, y_\mathbf{P}, z_\mathbf{P}, l_\mathbf{P}, w_\mathbf{P}, h_\mathbf{P}, \theta_\mathbf{P})$. Then, as introduced in Sec.~\ref{subsec:complexity}, the 3D version of the EC-IoU can be computed between the predictions and ground-truth targets. Our optimization configurations (e.g., the learning rate and data augmentation policy) follow the original PGD work~\cite{wang2021probabilistic}. We conduct the fine-tuning for 6 epochs with a batch size of 6 samples on an Nvidia RTX A6000 GPU.

Tab.~\ref{tab:kitti} presents the best result in 10 runs. $L_{\textsf{EC}\text{-}\textsf{IoU}}$ demonstrates more substantial increases in most numbers, including the standard mAP. In particular, it attains much higher EC-AP for the ``car" class as well as on average. Fig.~\ref{fig:kitti_results} provides qualitative results from two scenes. For most ground-truth objects, the blue boxes given by $L_{\textsf{EC}\text{-}\textsf{IoU}}$ either cover the objects more properly from the ego's perspective or achieve better alignment towards the objects.

Nonetheless, it is also noted in Tab.~\ref{tab:kitti} that for ``pedestrian," $L_{\textsf{EC}\text{-}\textsf{IoU}}$ delivers a lower EC-AP than the baseline, similar to $L_{\textsf{IoU}}$ in both standard AP and EC-AP. Such a performance drop is speculated to occur due to the smaller object size and a smaller number of instances in the dataset (4487 pedestrians vs. 28742 cars). Moreover, we recall that, based on the weighting formulation in Eq.~\eqref{eq:weighting}, EC-IoU tends to care more about close objects and falls back to IoU for faraway ones. Meanwhile, in real-world datasets, it is less common to see close pedestrians than close cars, leading to the performance difference between the two classes. For future improvement, EC-IoU may be combined with importance weighting schemes at the object level to emphasize specific classes or certain distance ranges (e.g.,~\cite{cheng2020safety,lyssenko2022towards}).

\section{Conclusion}

In this work, we developed EC-IoU, a safety-driven assessment approach that extends the existing ones such as IoU. Given an object, when two predictions~$\mathbf{P}_1$ and $\mathbf{P}_2$ share the same IoU value, $\mathbf{P}_1$ with a higher EC-IoU value implies that the predicted location is slightly closer to the ego vehicle, thereby preventing the downstream planning algorithm from safety surprises (i.e., an object is closer than expected). We demonstrated the mathematical properties of EC-IoU and, due to the intractability of a closed-form computation, proposed a precise and efficient approximation based on Mean Value Theorem. We conducted experiments with simulation and the representative nuScenes and KITTI datasets, confirming our proposal's advantage in explicit safety characterization. On a broader scope, our work aligns with recent investigations and flags a novel attempt to incorporate safety principles into the design and evaluation of learning-enabled algorithms. Moreover, it offers many avenues for exploration. Apart from further evaluation, we consider making~$\alpha$, the parameter controlling EC-IoU's weighting mechanism, more adaptive according to object distances or time-to-collision. Another interesting direction is to use EC-IoU as an indicator for online run-time monitoring.

\balance
\bibliographystyle{IEEEtran}
\bibliography{ref}

\end{document}